%% file: main.tex
\documentclass{article} 
\usepackage{arxiv}
\usepackage[utf8]{inputenc}
\usepackage[T1]{fontenc}
\usepackage[authoryear,round]{natbib}
\renewcommand{\shorttitle}{Depth Transfer of Neural Optimizers}
\date{}
\usepackage{comment}
\input{math_commands.tex}

\usepackage{booktabs}
\usepackage[hidelinks]{hyperref}
\usepackage{url}
\usepackage{amsthm}
\usepackage{amssymb}
\usepackage{graphicx}
\usepackage{enumitem}
\usepackage{float}
\usepackage{mathtools}
\newtheorem{theorem}{Theorem}
\usepackage{amsthm}

\theoremstyle{definition}
\newtheorem{assumption}{Assumption}
\newtheorem{definition}{Definition}
\newtheorem*{theorem*}{Theorem}
\newtheorem{lemma}{Lemma}
\newtheorem{remark}{Remark}
\newtheorem{corollary}{Corollary}
\newtheorem{proposition}{Proposition}
\usepackage{amsmath}
\newcommand{\Hnorm}[1]{\left\|#1\right\|_H}

\newcommand{\Pnorm}[1]{\left\|#1\right\|_P}
\newcommand{\Hnormstar}[1]{\left\|#1\right\|_{H^*}}
\newcommand{\Pnormstar}[1]{\left\|#1\right\|_{P^*}}
\newcommand{\Hopnorm}[1]{\left\|#1\right\|_{H\to H}}

\newcommand{\PHopnorm}[1]{\left\|#1\right\|_{P\to H}}

\usepackage{cleveref}
\crefname{assumption}{Assumption}{Assumptions}
\Crefname{assumption}{Assumption}{Assumptions}
\crefname{corollary}{Corollary}{Corollaries}
\Crefname{corollary}{Corollary}{Corollaries}
\crefname{lemma}{Lemma}{Lemmas}
\Crefname{lemma}{Lemma}{Lemmas}
\crefname{theorem}{Theorem}{Theorems}
\Crefname{theorem}{Theorem}{Theorems}
\crefname{proposition}{Proposition}{Propositions}
\Crefname{proposition}{Proposition}{Propositions}
\crefname{definition}{Definition}{Definitions}
\Crefname{definition}{Definition}{Definitions}
\crefname{remark}{Remark}{Remarks}
\Crefname{remark}{Remark}{Remarks}

\title{On the Depth Transfer of Neural Optimizers via $L$-Smoothness}

\author{
  Maximilian Wong\\
  University of Chicago \\
  \texttt{maxwong7@uchicago.edu}
  \And
  Chenyu Xu\\
 Peking University\\
    \texttt{2400010840@stu.pku.edu.cn}
    \And
    Jiajin Li \\
Sauder School of Business \\
University of British Columbia \\
\texttt{jiajin.li@sauder.ubc.ca}
\And
Yiping Lu \\
Beijing International Center for Mathematical Research \\
Peking University \\
\texttt{yipinglu@bicmr.pku.edu.cn}
}

\begin{document}
\raggedbottom

\maketitle

\begin{abstract}
Scaling deep learning models efficiently requires predictable performance transfer across compute regimes. Establishing reliable hyperparameter transfer allows optimal training dynamics and structural choices to be discovered on low-cost proxy models, avoiding catastrophic failures during massive-scale training. Existing depth-transfer analyses are architecture-specific and heuristic: hyperparameter transfer is only justified through toy blocks, and both rely on infinite-depth dynamics, a specific initialization, and feature-learning criteria. Building on $L$-smoothness framework of \citet{xu2026moga}, we understand hyperparameter transfer as how do the Lipschitz and $L$-smoothness constants of the loss scale with the depth $K$? We show that inverse-depth residual branch scaling $\alpha=\Theta(1/K)$ is sufficient to make both constants depth-independent at fixed width, uniformly over the bounded parameter domain and for arbitrary $p\to q$ geometries. Because the bounds depend on each residual branch only through its Lipschitz and smoothness constants, the argument applies to general residual networks—including branches of block depth $\geq 2$ such as transformer and ResNet blocks—without assumptions on initialization or correlations between layer weights. Since the smoothness constant controls the curvature penalty in the descent
inequality, our bounds motivate a learning-rate scale without an additional
depth-dependent curvature correction for updates bounded in the prescribed
norm. This provides a geometric explanation of inverse-depth residual scaling
and its application to the operator-norm geometries associated with AdamW,
Muon, and MOGA. Learning-rate sweeps on GPT-2 (depths 2--128) and CIFAR-100 ResNets (8--32 residual blocks) confirm that \textrm{Muon} and \textrm{MOGA} exhibit stable learning-rate transfer under these prescriptions, with a common optimal rate across a 64-fold increase in depth. 
\end{abstract}
\section{Introduction}
Hyperparameter choices play a central role in the stability, efficiency, and performance of neural network training.  Finding suitable values often requires many training runs, making hyperparameter tuning increasingly expensive as models grow \citep{yang2022tensorprogramsv,yang2023spectral,yang2023depthmup,dey2025completep,xu2026moga,mlodozeniec2026completed,jiang2026hyperparameter,ma2026mu,shigida2026nugpt}. Moreover, settings that work well for a small model may become ineffective at a larger scale, limiting the value of inexpensive preliminary experiments. Understanding how hyperparameters should change with model size is therefore essential for translating insights from small experiments into reliable and efficient training of large models. In practice, many state-of-the-art optimizers show optimal learning rate drifts as depth scales up (Figure \ref{fig:mlp-depth-lr-transfer}). In this paper, we aim to answer : 
\begin{center}
    \emph{How can we design optimization algorithms so that optimal learning rates transfer reliably as network depth increases?}
\end{center}

Prior work has approached depthwise hyperparameter transfer through parameterizations that preserve feature learning as network depth increases. Depth $\mu$P \citep{yang2023depthmup}  analyze training dynamics as width tends to infinity followed by depth, identifying feature learning and feature diversity as central design criteria for residual networks with one layer per branch. CompleteP \citep{dey2025completep} introduce complete feature learning to motivate residual scaling inversely proportional to depth and corresponding learning rate prescriptions for architectures with multiple layers per branch, including transformers. These theoretical analyses rely on specific architectural and initialization assumptions, are tailored to particular optimizers, and primarily characterize feature behavior in asymptotic regimes. However, preserving feature learning does not by itself establish that the optimal learning rate remains unchanged across depths. This raises a broader question: can we establish a general theory of depthwise learning rate transfer that applies to a broader class of architectures and optimizers, without relying on specific random initialization schemes or infinite width and depth limits?

\begin{figure}[t]
    \centering    \includegraphics[width=\linewidth]{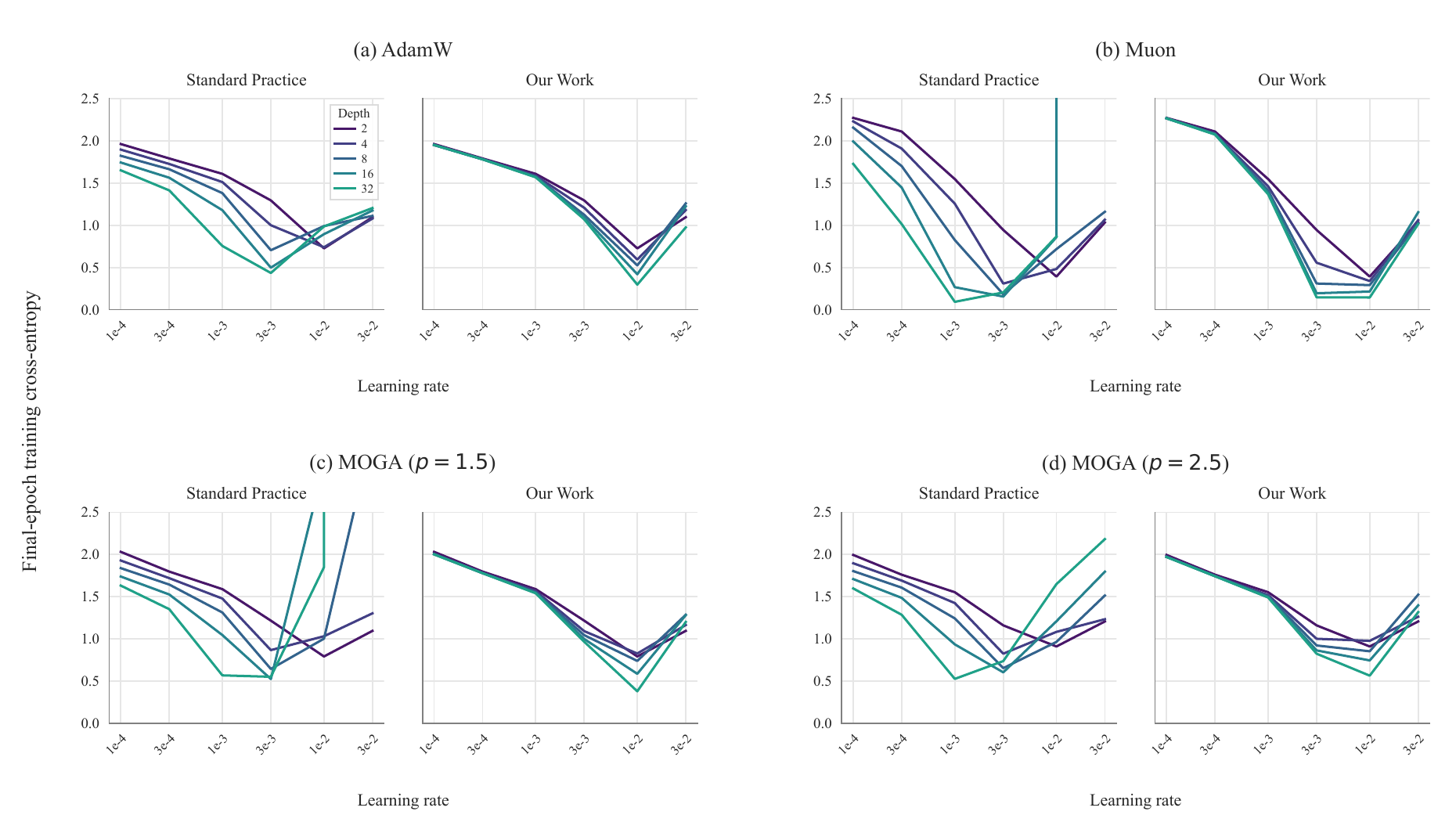}
    \vspace{-0.3in}
    \caption{Final-epoch training loss against learning rate for residual MLPs of varying depth trained on CIFAR-10 with AdamW, Muon, and MOGA. Each branch contains two linear layers, and width is fixed across depths. Under standard unscaled residual branches (left), the best learning rate varies with depth. Our inverse-depth scaling, ($\alpha=K_{\mathrm{base}}/K$) with ($K_{\mathrm{base}}=2$) (right), show that optimal learning rate is aligned across depths.}
    \label{fig:mlp-depth-lr-transfer}
\end{figure}

To understand depthwise learning-rate transfer, we follow \cite{xu2026moga} and characterize the stable learning rate through a scale-independent $L$-smoothness control. The $L$ smoothness condition controls how much the loss gradient changes as the network weights vary, measuring weight perturbations in a norm adapted to the optimizer and gradient changes in its dual. $L$-smoothness governs step-size selection: a gradient step is guaranteed to decrease the loss only when the learning rate is of order $1/L$, and larger steps overshoot the region where the linearization remains valid. Consequently, the stable learning rate scales inversely with $L$. The obstacle to a uniform depth bound $L$ is that the influence of an optimization step is not local. A full-network gradient update needs to perturb all layers simultaneously,  requiring control of both perturbation propagation and second-order interactions between layers. \textbf{Our framework extends the depth scaling analysis beyond the Adam prescriptions developed in Depth $\mu$P and CompleteP to a broader class of optimizers used in LLM pretraining, including Muon and row normalization.} Our bounds apply directly to finite networks and do not require the sequential limit used in Depth $\mu$P and CompleteP, where the width is taken to infinity before depth.



Numerical results support our theoretical observations. We evaluate learning-rate transfer under this prescription through sweeps on GPT-2 language models and CIFAR-100 ResNets. The training loss sweeps show an aligned low-loss learning-rate region for Muon and MOGA across GPT-2 depths from 2 to 128. On CIFAR-100, all three evaluated MOGA settings share the same best tested learning rate across ResNets with 8, 16, and 32 residual blocks, measured by final-epoch training loss. AdamW also retains a common low-loss region, although its best tested rate shifts toward smaller values at the largest GPT-2 depths. Together, the theoretical bounds and empirical results provide a geometric account of depth scaling and support learning-rate transfer across the evaluated architectures and optimizer geometries.
\section{Preliminaries}
\subsection{Settings, Notations and Assumptions}
We consider the optimization problem of minimizing a neural network loss function:
\begin{align*}
    \min_{\theta}\mathcal L(\theta):= l(y_K(x;\theta))
\end{align*}

Let \((H,\Hnorm{\cdot})\) be the hidden-state space and let \((P,\Pnorm{\cdot})\) be the parameter space. Then for all \(k\), we have \(y_k\in H\), \(W_k,U_k,V_k\in P\), and \(f_k:H\times P\to H\). Throughout this paper, we assume that the branch function \(f_k\) is twice continuously Fréchet differentiable. The derivatives are maps \(\partial_y f_k:H\to H\) and \(\partial_{W_k} f_k:P\to H\). The second derivatives are maps \(\partial_y^2 f_k:H\times H\to H\), \(\partial_y\partial_{W_k} f_k:H\times P\to H\), and \(\partial^2_{W_k} f_k:P\times P\to H\).

For a parameter tuple \(W=(W_1,\ldots,W_K)\), define the block norm
\begin{align*}
\|W\|_{\mathrm{block}} = \max_{1\le k\le K} \Pnorm{W_k},
\end{align*}
and its dual
\begin{align*}
\|G\|_{\mathrm{block},*} = \sum_{k=1}^{K} \Pnormstar{G_k},
\end{align*}
where \(\Pnormstar{\cdot}\) denotes the dual norm on \(P^*\).

For notational simplicity, we write \(f_{W_k}(y):=f_k(y,W_k)\).

\begin{definition}[Abstract Residual Network]\label{def:resnet}
Let \((H,\|\cdot\|_H)\) and \((P,\|\cdot\|_P)\) be normed vector spaces. Let \(x\in H\) be the input. Define \(y_0 = x\) and for \(k=1,\ldots,K\),
\begin{align*}
y_k = y_{k-1} + \alpha f_{W_k}(y_{k-1}),
\end{align*}
where \(f_{W_k}:H\to H\) is a twice continuously differentiable residual function parameterized by \(W_k\in P\). In this paper, we set the scaling factor as
\begin{align*}
\alpha = \frac{\beta}{K}, \qquad \beta = \Theta(1).
\end{align*}
The network output is \(y_K\in H\), and the loss is
\begin{align*}
\mathcal{L}(W) = l(y_K),
\end{align*}
with \(l:H\to\mathbb{R}\). The parameter tuple is \(W=(W_1,\ldots,W_K)\), and the perturbation directions are \(\Delta U=(\Delta U_1,\ldots,\Delta U_K)\) and \(\Delta V=(\Delta V_1,\ldots,\Delta V_K)\), with \(\Delta U_k,\Delta V_k\in P\).
\end{definition}

We make the following regularity assumptions on the network we defined. They are imposed throughout the paper.
\begin{assumption}[Initial Condition and Perturbation Bounds]\label{as1}
\begin{align*}
    \Hnorm{x}\le C_x,\qquad
    \Pnorm{\Delta U_k}\le 1,\qquad
    \Pnorm{\Delta V_k}\le 1.
\end{align*}
\end{assumption}
\begin{assumption}[Residual Function Regularity]\label{as2}
There exist positive constants \(C_f\), \(L_f\), \(M_f\),
\(C_W\), \(M_{Wf}\), and \(M_{WW}\), independent of \(K\),
such that the following bounds hold for every \(k=1,\ldots,K\),
every \(y,y_1,y_2\in H\), every \(W_k,W_k'\in P\),
and every \(\Delta U_k,\Delta V_k\in P\). In appendix B.1 and B.2, with mild assumptions on parameter space and function regularity, we can satisfy our abstract residual function regularity for MLP and Attention blocks
\begin{itemize}
    \item[2.1]\textbf{First Order Regularity in Hidden State}
    \begin{align*}
        \Hnorm{f_{W_k}(y_1)-f_{W_k}(y_2)} \le L_f \Hnorm{y_1-y_2}
    \end{align*}
    \item[2.2]\textbf{Second Order Regularity in the Hidden State}
    \begin{align*}
        \Hopnorm{\partial_y f_{W_k}(y_1)-\partial_y f_{W_k}(y_2)} \le M_f \Hnorm{y_1-y_2}
    \end{align*}
    \item[2.3] \textbf{First Order Regularity in the Parameters}
    \begin{align*}
        \Hnorm{f_{W_k}(y)-f_{W_k'}(y)} \le C_W \Pnorm{W_k-W_k'}
    \end{align*}
    \item[2.4] \textbf{Mixed-State Parameter Regularity}
    \begin{align*}
        \Hopnorm{\partial_y f_{W_k}(y)-\partial_y f_{W_k'}(y)} \le M_{Wf} \Pnorm{W_k-W_k'}
    \end{align*}
    \item[2.5] \textbf{Second Order Regularity in the Parameters}
    \begin{align*}
        \Hnorm{\partial_{W_k}f_{W_k}(y)[\Delta U_k]-\partial_{W_k'}f_{W_k'}(y)[\Delta U_k]} \le M_{WW} \Pnorm{W_k-W_k'} \Pnorm{\Delta U_k}
    \end{align*}
    \item[2.6] \textbf{Bounded Output at Initial State}
    \begin{align*}
        \Hnorm{f_{W_k}(y_0)} \le C_f
    \end{align*}
\end{itemize}
\end{assumption}
\begin{remark} Following \citep{xu2026moga}, we show that our assumption is realistic. Their analysis was conducted on MLP and attention blocks, and they imposed activation function regularity and bounded parameter assumptions that are consistent with our generalized abstract-residual function \cref{as2} for the corresponding MLP and Attention blocks. Our set-up is therefore a faithful abstraction of the standard conditions satisfied by modern architectures; the detailed reductions are carried out in the appendix via \cref{prop:mlp} and \cref{prop:attention}.
\end{remark}
\begin{assumption}[ Regularity of the Loss Function]\label{as3}
There exist constants \(L_J,M_J>0\), independent of \(K\), such that for all \(z\in H\),
\begin{align*}
\Hnormstar{\nabla l(z)} \le L_J, \qquad
|\nabla^2 l(z)[u,v]| \le M_J \Hnorm{u} \Hnorm{v}.
\end{align*}
\end{assumption}

\subsection{Related Works}

\paragraph{Optimizers as Steepest Descent Under Matrix Operator Norms} Many state-of-the-art optimizers can be approximated by steepest descent under some matrix operator norm $\|\cdot\|_{p\rightarrow q}$ \citep{bernstein2024oldoptimizer,pethick2025scion,xu2026moga}. The steepest descent direction $D^*$ is given by
\begin{align*}
    D^*=\argmin_{\|D\|_{p\rightarrow q}\leq 1} \langle \nabla f(\theta),D\rangle, \text{ where } \|D\|_{p\rightarrow q} :=\sup_{x\neq 0}\frac{\|Dx\|_q}{\|x\|_p} = \sup_{\|x\|_p=1}\|Dx\|_q
\end{align*}
For example, we will recover Adam \citep{kingma2015adam,balles2018dissecting}, SignSGD \citep{bernstein2018signsgd} and Lion \citep{chen2023symbolic} as steepest descent under $\ell_1\rightarrow \ell_\infty$, MUON as steepest descent under $\ell_2\rightarrow\ell_2$ \citep{jordan2024muon,bernstein2024oldoptimizer} and Row normalization as steepest descent under  $\ell_2\rightarrow\ell_\infty$ \citep{wen2025sron,yuan2026nora,deng2026rmnp,xu2026moga} .

Steepest descent under matrix operator norms is well motivated because these norms capture
the smoothness structure of neural networks: with respect to a suitable 
operator norm on each layer, the network output and loss are Lipschitz and
smooth in the weights with constants that remain controlled as the architecture scales \citep{xu2026moga}, so
the resulting steepest-descent update, obtained by dualizing the gradient under this norm,
takes steps adapted to the true local geometry of the loss \citep{large2024scalable,bernstein2024modularduality,xu2026moga}.

\paragraph{Hyperparameter Transfer.} Hyperparameter transfer, specifically learning-rate transfer, is the technique of reusing a learning rate tuned on a smaller network for the training of a larger model which eliminates the cost of redundant hyperparameter tuning. In particular, we study learning rate transfer over depth. Depth-$\mu$P \citep{yang2023depthmup} uses the tensor programs framework \cite{yang2020tensor,yang2022tensorprogramsv} to study the infinite-depth limit of training dynamics and derive parameterization rules that satisfy the activation and gradient scales feature learning rules as depth varies. However, depth-$\mu$P holds only for block depth 1 models and their scaling rules deteriorate for the commonly used transformer, which has block depth 2. To fix the limitation of Depth-$\mu$P, Complete-P \citep{dey2025completep} takes the $\mu$P framework and derives another set of rules targeted at block-depth $\geq 2$ architectures. Both works are limited to block-depth constraints and rely heavily on infinite-depth initialization and early training dynamics. Furthermore, rigorous convergence analysis of training dynamics are only established for block-depth 1, meaning that feature learning analysis cannot solely establish the connection to how the network evolves under training for block-depth $\geq2$ architecture. In contrast, our analysis is robust to block-depth; in particular, we provide a worst-case guarantee that is sufficient for architectures of any block-depth. Our analysis also directly controls the gradient scale and loss sensitivity of the network as depth increases which provides a more principled alternative explanation to learning rate transfer scaling rules for transformers. Lastly, our approach does not require specific initialization configurations or weight correlation assumptions, providing learning-rate scaling rules beyond specialized cases in prior works. 

Zheng et al. (2026) develop spectral conditions for joint width-depth scaling
across residual architectures and optimizers, including Muon. Their
parameterizations include inverse-depth residual scaling and learning rates
without a depth correction at fixed width. Our analysis provides a
complementary, unified framework through depth-uniform bounds on loss
sensitivity and curvature under prescribed parameter norms. By expressing
the bounds in terms of common residual-function regularity conditions,
the same argument applies across architectures and norm geometries
satisfying these conditions.

\paragraph{Norm Geometry and L-smoothness.} \citet{large2024scalable} establish Lipschitz and smoothness
guarantees in a modular norm constructed recursively from module
sensitivities and mass allocations, including depth-independent
bounds for a suitably scaled residual construction
(Propositions 4--5). We study depth scaling directly in a prescribed
parameter geometry. For the identity-skip recurrence
$y_k=y_{k-1}+(\beta/K)f_k(y_{k-1})$, we establish depth-uniform
Lipschitz and smoothness bounds at fixed width in the unweighted
maximum of prescribed parameter-block norms, under uniform branch
and loss regularity assumptions. Specializing these norms to
$p\to q$ operator norms makes the associated unit-ball
steepest-descent directions explicit and computable blockwise,
without introducing separate module-mass parameters.
The resulting smoothness bound gives a depth-independent curvature
penalty for updates measured in this geometry, providing the basis
for the learning-rate discussion in Section 3.2 and our
optimizer-specific prescriptions. The analysis treats each residual
branch through its regularity constants and therefore also covers
multi-layer branches satisfying these assumptions.

\section{Main Theorems}
\subsection{Depth-Independent Lipschitz Constant}
\cite{xu2026moga} showed that Lipschitz continuity can characterize the sensitivity of network loss to parameter perturbations and established a width-independent Lipschitz bound using mean-normalized norm, making the network stable in width. Here, we study whether an appropriate branch scaling can yield a depth-independent Lipschitz bound to make the loss function sensitivity controlled in depth. Denote $\delta y_a=\nabla_{W_a}y_i[\Delta U_a]$ the first order effect of perturbing $W_a$ in the direction $\Delta U_a$. For layers after $a$, the residual Jacobian $J_i=\partial y_i/\partial y_{i-1}$ maps the incoming perturbation to the next layers. The chain rule gives:
\begin{align*}
    \delta y_i = \delta y_{i-1}+\alpha \partial_yf_{W_i}(y_{i-1})[\delta y_{i-1}]=J_i \delta y_{i-1}
\end{align*}
Therefore for a later layer $b$ we have
\begin{align*}
    \delta y_b=J_b\dots J_{a+1}\delta y_a
\end{align*}
Under our assumptions, we have the following upper bound for the Jacobians (see the Appendix for details)
\begin{align*}
    \Hopnorm{J_i}\leq 1+C\alpha 
\end{align*}
Where $C\in O(1)$. Therefore, we have
\begin{align*}
    \|\delta y_b\|_H\leq (1+C\alpha)^{b-a}\|\delta y_a\|_H\leq (1+C\alpha)^K\|\delta y_a\|_H
\end{align*}
Which shows that the worst case guarantee blows up as $K\rightarrow \infty$ (number of layers between $a$ and $b$ increases). However, an appropriate branch scaling can control this bound. Setting $\alpha =1/K$, we have
\begin{align*}
    \|\delta y_b\|_{H}\leq (1+\frac{C}{K})^{b-a}\|\delta y_a\|_{H}& \leq (1+\frac{C}{K})^K\|\delta y_a\|_{H} \\ &\leq \exp (C)\|\delta y_a\|_{H}
\end{align*}
\begin{theorem}[Depth-Independent Lipschitz Continuity]\label{thm:lipschitz}
Under \cref{as1,as2,as3} and \(\alpha=\beta/K\), \(\beta=\Theta(1)\), there exists \(L_{\mathrm{Lip}}=O(1)\), independent of \(K\), such that
\begin{align*}
\sup_{\|\Delta U\|_{\mathrm{block}}\le 1} |\nabla \mathcal{L}(W)[\Delta U]| \le L_{\mathrm{Lip}}.
\end{align*}
Consequently,
\begin{align*}
|\mathcal{L}(W^1)-\mathcal{L}(W^2)| \le L_{\mathrm{Lip}} \|W^1-W^2\|_{\mathrm{block}}.
\end{align*}
\end{theorem}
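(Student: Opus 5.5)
We will prove the directional-derivative bound via the chain rule and a linearized forward recursion, and then obtain the second statement from the mean value theorem. Fix $W$ and a direction $\Delta U$ with $\|\Delta U\|_{\mathrm{block}}\le 1$, so that $\Pnorm{\Delta U_k}\le 1$ for every $k$. Write $\delta y_i := \nabla_W y_i[\Delta U]$ for the first-order variation of the hidden state under the simultaneous perturbation of all blocks. Differentiating the recursion in \cref{def:resnet} gives
\begin{align*}
\delta y_i = \delta y_{i-1} + \alpha\,\partial_y f_{W_i}(y_{i-1})[\delta y_{i-1}] + \alpha\,\partial_{W_i} f_{W_i}(y_{i-1})[\Delta U_i], \qquad \delta y_0 = 0 .
\end{align*}
Since $\nabla\mathcal L(W)[\Delta U] = \nabla l(y_K)[\delta y_K]$, \cref{as3} yields $|\nabla\mathcal L(W)[\Delta U]|\le L_J\Hnorm{\delta y_K}$. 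It therefore suffices to bound $\Hnorm{\delta y_K}$ by a constant independent of $K$.

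Next we collect the two per-layer estimates that the recursion needs. First, Assumption 2.1 implies that $\Hopnorm{\partial_y f_{W_i}(y)}\le L_f$ for every $y$, since the Fréchet derivative of an $L_f$-Lipschitz map has operator norm at most $L_f$. This gives $\Hopnorm{J_i}\le 1+\alpha L_f$, which is the bound previewed before the theorem with $C=L_f$. Second, Assumption 2.3 implies in the same way that $\Hnorm{\partial_{W_i} f_{W_i}(y)[\Delta U_i]}\le C_W\Pnorm{\Delta U_i}\le C_W$, uniformly in $y$. Neither estimate requires a bound on $y_{i-1}$ itself, so the forward states never need to be controlled here. (If one insisted on evaluating constants along the trajectory, one would also show $\Hnorm{y_k}\le (C_x+\beta C_f/L_f)e^{\beta L_f}$ by the same Grönwall argument, using 2.1 and 2.6. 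The uniform assumptions make this unnecessary.)

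Substituting these estimates into the recursion gives
\begin{align*}
\Hnorm{\delta y_i}\le (1+\alpha L_f)\Hnorm{\delta y_{i-1}} + \alpha C_W .
\end{align*}
Unrolling with $\delta y_0=0$ produces
\begin{align*}
\Hnorm{\delta y_K}\le \alpha C_W\sum_{j=0}^{K-1}(1+\alpha L_f)^j \le K\alpha C_W (1+\beta L_f/K)^K\le \beta C_W e^{\beta L_f}.
\end{align*}
This is the key point of the argument. The sum has $K$ terms, each of which is $O(\alpha)$ times an exponential factor that stays bounded by $e^{\beta L_f}$ because $\alpha=\beta/K$. The factor $K$ from the sum is cancelled exactly by the $1/K$ in $\alpha$. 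Taking the supremum over admissible $\Delta U$ then gives the bound with
\begin{align*}
L_{\mathrm{Lip}} := L_J\beta C_W e^{\beta L_f},
\end{align*}
which is independent of $K$.

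For the consequence, set $W(t)=W^2+t(W^1-W^2)$ and apply the mean value theorem to $t\mapsto\mathcal L(W(t))$, which is $C^1$ because $f_k$ and $l$ are. Rescaling the direction $W^1-W^2$ by its block norm and applying the first bound at each point $W(t)$ gives the Lipschitz inequality in $\|\cdot\|_{\mathrm{block}}$. This uses that the first bound holds uniformly in $W$, which the argument above delivers.

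The main obstacle is the summation step. A naive layer-by-layer argument perturbs $K$ blocks at once, and each block's contribution is amplified by a Jacobian product. One must verify both that the amplification factor is $e^{\beta L_f}$ rather than growing in $K$, and that the $K$ contributions sum to $O(1)$ instead of $O(K)$. The $\alpha=\beta/K$ scaling is used twice, once for each of these. I would present the argument via the inhomogeneous recursion above, which handles both uses simultaneously.
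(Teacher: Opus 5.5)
Your proof is correct and is essentially the paper's argument. The paper writes \(\nabla_W y_K[\Delta U]\) as a sum of per-block contributions \(J_K\cdots J_{j+1}\,\alpha\,\partial_{W_j}f_{W_j}(y_{j-1})[\Delta U_j]\), bounds each by \(e^{\beta L_f}\alpha C_W\), and sums over \(j\) --- exactly what unrolling your inhomogeneous recursion does --- reaching the same constant \(L_{\mathrm{Lip}}=L_J\beta C_W e^{\beta L_f}\) before integrating along the segment. One inessential slip: your parenthetical forward-state bound \((C_x+\beta C_f/L_f)e^{\beta L_f}\) fails when \(\beta\) is small and \(\beta L_f\) is large. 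The recursion actually gives \(C_x+C_f(e^{\beta L_f}-1)/L_f\), so e.g.\ \((C_x+C_f/L_f)e^{\beta L_f}\) is valid; as you note, this bound is never used.
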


\subsection{Depth-Independent L-smoothness Theorem}
While Lipschitz continuity is enough to control the sensitivity of the network to weight perturbations, it provides limited information on the actual optimization dynamics during training. We show that inverse depth branch scaling provides a depth-independent L-smoothness constant that directly controls how much gradients can vary along the optimization trajectory. Bounding this quantity is crucial, as it ensures that the steps we take in optimization are stable as depth varies. 
\begin{theorem}[Depth-Independent \(L\)-Smoothness]\label{thm:smooth}
Under \cref{as1,as2,as3} and \(\alpha=\beta/K\), \(\beta=\Theta(1)\), there exists \(L=O(1)\), independent of \(K\), such that
\begin{align*}
\sup_{\|\Delta U\|_{\mathrm{block}}\le 1, \|\Delta V\|_{\mathrm{block}}\le 1} |\nabla^2 \mathcal{L}(W)[\Delta U,\Delta V]| \le L.
\end{align*}
Consequently,
\begin{align*}
\|\nabla \mathcal{L}(W^1)-\nabla \mathcal{L}(W^2)\|_{\mathrm{block},*} \le L \|W^1-W^2\|_{\mathrm{block}}.
\end{align*}
\end{theorem}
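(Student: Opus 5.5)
We will expand the second derivative of $\mathcal L(W)=l(y_K)$ by the chain rule,
\begin{align*}
\nabla^2\mathcal L(W)[\Delta U,\Delta V]=\nabla^2 l(y_K)\big[\delta^U y_K,\delta^V y_K\big]+\nabla l(y_K)\big[\delta^{UV} y_K\big],
\end{align*}
where $\delta^U y_k=\nabla_W y_k[\Delta U]$ and $\delta^{UV}y_k=\nabla_W^2 y_k[\Delta U,\Delta V]$. By \cref{as3}, it suffices to show that $\Hnorm{\delta^U y_K}$, $\Hnorm{\delta^V y_K}$ and $\Hnorm{\delta^{UV}y_K}$ are $O(1)$ uniformly in $K$ whenever $\|\Delta U\|_{\mathrm{block}},\|\Delta V\|_{\mathrm{block}}\le 1$. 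The first-order bound $\Hnorm{\delta^U y_K}\le C_1=O(1)$ is exactly what the proof of \cref{thm:lipschitz} gives.

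Step one records that state and Jacobian bounds are uniform in depth. Using 2.1 and 2.6, $\Hnorm{f_{W_k}(y)}\le C_f+L_f\Hnorm{y-y_0}$, so $\Hnorm{y_k-x}\le(1+\alpha L_f)\Hnorm{y_{k-1}-x}+\alpha C_f$. A discrete Grönwall argument with $\alpha=\beta/K$ then gives $\Hnorm{y_k}\le C_x+C_f(e^{\beta L_f}-1)/L_f$. We also have $\Hopnorm{J_i}\le 1+\alpha L_f$, since 2.1 implies $\Hopnorm{\partial_y f}\le L_f$. Hence every product satisfies $\Hopnorm{J_b\cdots J_{a+1}}\le e^{\beta L_f}$. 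The first-order recursion reads $\delta^U y_k=J_k\delta^U y_{k-1}+\alpha\,\partial_{W_k}f_{W_k}(y_{k-1})[\Delta U_k]$, and the forcing term has norm at most $\alpha C_W$ by 2.3. The variation-of-constants sum therefore gives $\Hnorm{\delta^U y_K}\le K\alpha C_W e^{\beta L_f}=\beta C_W e^{\beta L_f}$. The point is that $K$ forcing terms, each of size $O(1/K)$, sum to $O(1)$.

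Step two differentiates the first-order recursion again to get the second-order recursion
\begin{align*}
\delta^{UV}y_k &= J_k\,\delta^{UV}y_{k-1} + \alpha\Big(\partial_y^2 f_{W_k}(y_{k-1})[\delta^U y_{k-1},\delta^V y_{k-1}] + \partial_y\partial_{W_k}f_{W_k}(y_{k-1})[\delta^U y_{k-1},\Delta V_k]\\
&\quad + \partial_y\partial_{W_k}f_{W_k}(y_{k-1})[\delta^V y_{k-1},\Delta U_k] + \partial_{W_k}^2 f_{W_k}(y_{k-1})[\Delta U_k,\Delta V_k]\Big),
\end{align*}
with $\delta^{UV}y_0=0$. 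Each second derivative is bounded by passing from the Lipschitz conditions to operator-norm bounds on the derivatives (take difference quotients with $W_k'=W_k+t\Delta V_k$ or $y_2=y_1+tv$, then let $t\to0$). Condition 2.2 gives $\|\partial_y^2 f\|\le M_f$, 2.4 gives $\|\partial_y\partial_W f\|\le M_{Wf}$, and 2.5 gives $\|\partial_W^2 f\|\le M_{WW}$. Combined with Step one, the bracket is bounded by $M_fC_1^2+2M_{Wf}C_1+M_{WW}=O(1)$. The same variation-of-constants sum gives $\Hnorm{\delta^{UV}y_K}\le \beta e^{\beta L_f}(M_fC_1^2+2M_{Wf}C_1+M_{WW})$. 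The Hessian bound follows with $L=M_JC_1^2+L_J\,\beta e^{\beta L_f}(M_fC_1^2+2M_{Wf}C_1+M_{WW})$.

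Step three deduces the gradient-Lipschitz statement. For $G=\nabla\mathcal L(W^1)-\nabla\mathcal L(W^2)$, the norm $\|G\|_{\mathrm{block},*}$ is the dual of the block norm, so it equals $\sup_{\|\Delta U\|_{\mathrm{block}}\le1}G[\Delta U]$. By the mean value theorem along the segment $W^2+t(W^1-W^2)$, $G[\Delta U]=\int_0^1\nabla^2\mathcal L[\Delta U,W^1-W^2]\,dt$. Rescaling $W^1-W^2$ to block norm one and applying bilinearity gives the bound $L\|W^1-W^2\|_{\mathrm{block}}$.

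The main obstacle is Step two. Perturbations at one layer interact with perturbations at every other layer, which suggests an $O(K^2)$ number of cross terms. The recursive formulation avoids this: the cross-layer interactions enter only through $\delta^U y_{k-1}$ and $\delta^V y_{k-1}$, which are already $O(1)$, and each layer contributes only an $O(\alpha)$ forcing term. Care is needed to justify the operator-norm bounds on second derivatives from the stated Lipschitz-type assumptions. This works because the paper assumes $f_k$ is $C^2$, so the derivatives exist and the difference quotients converge. One also needs the parameter domain to be the one on which the constants hold, with all bounds uniform over it.
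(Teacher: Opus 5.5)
Your proof is correct. It uses the same ingredients as the paper: the chain-rule split of the Hessian, the Jacobian-product bound \(e^{\beta L_f}\), the \(K\cdot O(1/K)\) cancellation, and integration along the segment for the dual-norm statement. Where you differ is the bookkeeping at second order.

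The paper splits the second variation into the \(K^2\) layer-pair terms \(h_K^{(j,\ell)}=\nabla^2_{W_j,W_\ell}y_K[\Delta U_j,\Delta V_\ell]\). It unrolls a separate recursion for each pair, with a case analysis over \(j<\ell\), \(j>\ell\) and \(j=\ell\). Using the per-layer first-order bound \(\|s_k^{(j)}\|_H\le C_s/K\), it shows \(\|h_K^{(j,\ell)}\|_H=O(1/K^2)\) off the diagonal and \(O(1/K)\) on it. The \(K(K-1)\) off-diagonal and \(K\) diagonal pairs then sum to \(O(1)\).

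You instead run a single forced linear recursion for the full second variation \(\delta^{UV}y_k=\sum_{j,\ell}h_k^{(j,\ell)}\). By bilinearity, all cross-layer interactions collapse into \(\partial_y^2 f_{W_k}[\delta^U y_{k-1},\delta^V y_{k-1}]\) plus the two mixed terms. These involve only the aggregate first variations, which are already \(O(1)\), so each layer contributes an \(O(\alpha)\) forcing term. The same variation-of-constants estimate you used at first order then finishes the argument.

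Your route is shorter and removes the case analysis and the pair counting. It also gives an explicit constant \(L=M_JC_1^2+L_J\beta e^{\beta L_f}(M_fC_1^2+2M_{Wf}C_1+M_{WW})\). It shows directly that the apparent \(O(K^2)\) cross terms are harmless.

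The paper's route yields finer information: the layerwise block structure of the Hessian, with diagonal blocks of size \(O(1/K)\) and off-diagonal blocks of size \(O(1/K^2)\). Your aggregate estimate discards this. It would matter if one wanted to combine the blockwise norms differently from the unweighted maximum, or to reason about updating individual layers.

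As a minor point, your forward state bound in Step one is never used, since Assumption 2 is global in \(y\). It becomes relevant only for the localized version of the assumptions, which is also the only place the paper uses its analogue.
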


\paragraph{Why L-Smoothness Motivates Learning Rate Transfer} Following \citet{xu2026moga}, we use the smoothness constant $L$ to guide learning rate selection. For an update $x^+=x+\eta D$ along a unit norm steepest descent direction, the linear approximation predicts a loss decrease of $\eta\|\nabla f(x)\|_*$, while the curvature penalty is bounded by $L\eta^2/2$. Minimizing the resulting upper bound on the loss gives $\eta^*=\|\nabla f(x)\|_*/L$. A common smoothness bound across depths therefore ensures that the curvature term does not require smaller learning rates as the network becomes deeper. If the dual gradient norm also remains comparable across depths, this rule yields learning rates of comparable magnitude, providing a theoretical motivation for depthwise learning rate transfer.

\subsection{Proof Sketch}
We outline the idea behind \cref{thm:lipschitz} and \cref{thm:smooth}. All constants below
are independent of the depth $K$; the whole argument rests on a $1/K$ cancellation against a
$K$-fold summation.

\paragraph{(i) A stable forward pass.}
With $\alpha=\beta/K$, each residual step $y_k = y_{k-1}+\alpha f_{W_k}(y_{k-1})$ perturbs the
state by only $O(1/K)$. Since there are only $K$ steps, the states remain uniformly bounded,
$\Hnorm{y_k}\le C_y$ for all $k$ (\cref{cor:fwd}). Hence every quantity evaluated at the iterates
$y_{k-1}$ is bounded a priori, independently of $K$.

\paragraph{(ii) Controlled Jacobian products.}
The layer Jacobian $J_k = I + \alpha\,\partial_y f_{W_k}(y_{k-1})$ satisfies
$\Hopnorm{J_k}\le 1+\alpha L_f$, so arbitrary products obey
\[
\|J_K\cdots J_j\|_{H\to H}
\le (1+\beta L_f/K)^K \le e^{\beta L_f} = C_J = O(1)
\]
(\cref{cor:jacprod}). Perturbations thus don't blow up as they propagate through the
network.

\paragraph{(iii) First-order directional derivatives are $O(1/K)$ each.}
By the chain rule, $\nabla_W\mathcal{L}[\Delta U]=l'(y_K)\sum_j s_K^{(j,U_j)}$. Each term
$s_K^{(j,U_j)}$ originates from the single source $\alpha\,\partial_{W_j}f_{W_j}(y_{j-1})[\Delta U_j]$
at layer $j$ and is then propagated by $J_K\cdots J_{j+1}$, giving
\[
\Hnorm{s_K^{(j,U_j)}} \le C_J\cdot \alpha C_W = O(1/K).
\]
Summing over the $K$ layers, $\bigl\|\sum_j s_K^{(j,U_j)}\bigr\|\le C_s=O(1)$ (\cref{cor:s-general}).

\paragraph{(iv) Second-order directional derivatives: $O(1/K)$ diagonal, $O(1/K^2)$ off-diagonal.}
The Hessian splits into an outer-product part $l''(y_K)(\sum_j s_K^{(j)})(\sum_\ell s_K^{(\ell)})
=O(1)$ and a propagation part $l'(y_K)\sum_{j,\ell}h_K^{(j,\ell)}$. In the latter, the source carries
two derivative hits, hence an $\alpha^2$ factor:
\[
\Hnorm{h_K^{(j,j)}}=O(1/K),\qquad \Hnorm{h_K^{(j,\ell)}}=O(1/K^2)\quad (j\neq\ell).
\]
There are $K$ diagonal pairs and $K(K-1)$ off-diagonal pairs, so
\[
\Bigl\|\sum_{j,\ell}h_K^{(j,\ell)}\Bigr\|
\le K\cdot O(1/K) + K^2\cdot O(1/K^2) = O(1).
\]

\paragraph{(v) From directional bounds to Lipschitz/smoothness.}
Having shown $\sup_{\|\Delta U\|\le1}|\nabla\mathcal{L}[\Delta U]|$ and
$\sup_{\|\Delta U\|,\|\Delta V\|\le1}|\nabla^2\mathcal{L}[\Delta U,\Delta V]|$ are uniformly bounded,
\cref{lem:lipschitz} and \cref{lem:smooth} (via the fundamental theorem of calculus) upgrade these to
the Lipschitz and smoothness statements in the block norm, completing the proof of \cref{thm:lipschitz}
and \cref{thm:smooth}.

\section{Experiments}
We evaluate our scaling laws on MOGA, Muon, and Adam. To examine the robustness of MOGA, we run on the norm parameter $p\in\{1.5,2,2.5\}$. We examine large-scale pretraining tasks on GPT2 \citep{radford2019gpt2} and the image classification tasks on ResNet \citep{he2016resnet} to show that our scaling prescriptions are robust to architecture. 

\subsection{Learning Rate Transfer On Transformers}
We use the GPT2 Model \citep{radford2019gpt2} and equip it with the standard tokenizer of vocabulary size 50257, context-length 1024, and the OpenWebText Dataset \citep{gokaslan2019openwebtext}. Below are the detailed configurations:
\begin{enumerate}
    \item \textbf{Model} We run GPT2 with fixed 256 width and 4 attention heads and vary the depth of the model over $\{2,4,8,16,32,64,128\}$. This model is run with Muon, AdamW, and Moga $p\in \{1.5,2,2.5\}$.
    \item \textbf{Learning-rate Schedule and Training Configuration} We run GPT2 from scratch with 256 effective batch size and 1200 iterations, corresponding to a 315 Million token budget per run. We use cosine learning rate decay with $10\%$ warmup and $\text{min\_lr = 0.1\ peak\_lr}$ meaning that the learning rate will increase linearly to the peak learning rate and decay with a cosine schedule to $10\%$ of the peak learning rate at the end of training. We sweep across six learning rates $\{1\times 10^{-3},2\times 10^{-3},4\times 10^{-3},1\times 10^{-2},2\times 10^{-2},4\times 10^{-2}\}$. 
\end{enumerate}
\begin{figure}[t]
    \centering
    \includegraphics[width=0.8\linewidth]
        {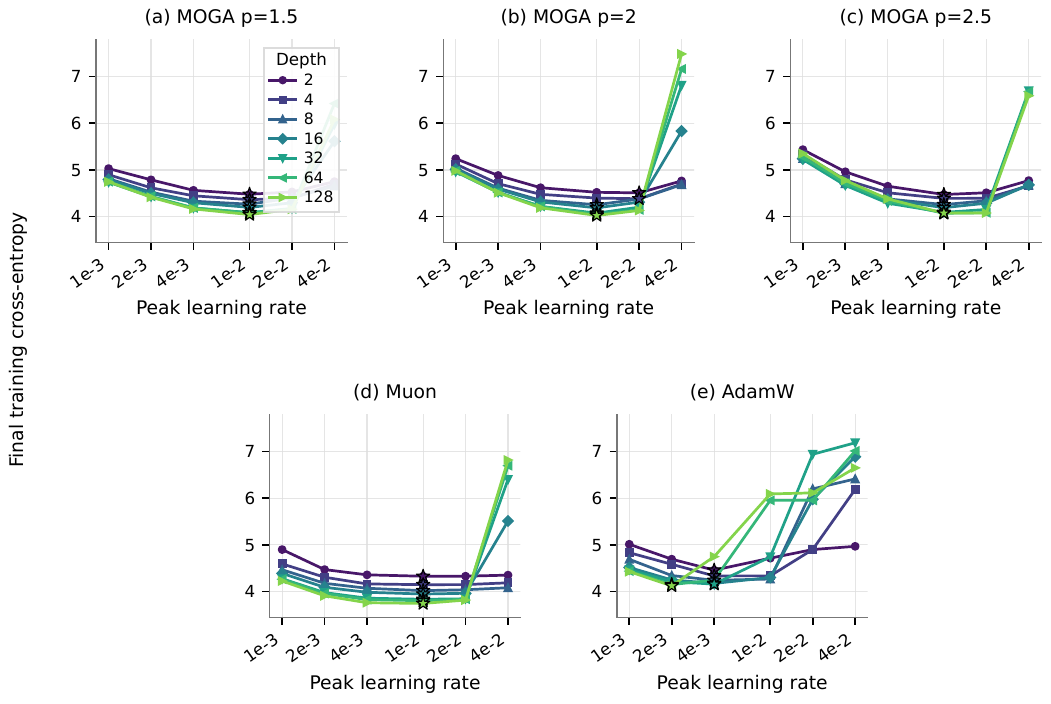}
    \caption{
        Learning-rate transfer across GPT-2 depths at fixed width.
        Panels show MOGA with $p\in\{1.5,2,2.5\}$, Muon, and AdamW.
        Colors and markers indicate transformer depth.
        Stars mark the lowest training loss among the tested
        learning rates at each depth.
    }
    \label{fig:transformer-lr-transfer-1}
\end{figure}
As shown in figure~\ref{fig:transformer-lr-transfer-1}, models spanning a 64-fold depth counts choose nearly the same optimal learning rate. This implies that the optimal learning rate is effectively invariant to model depth for MOGA and Muon. We observe that optimal learning rates tuned for shallow models can generally be transferred directly to deeper models, thus eliminating the cost of retuning hyperparameters. 

As our theory predicted, choosing different $p$ values for MOGA does not affect its learning rate transfer as none of the optimizer geometries require additional learning rate corrections. Of all the optimizers, Muon and MOGA exhibit a wide stable learning rate area that transfers as depth increases, indicating strong stable learning rate transfer. Notably, Adam becomes more unstable at large depths but the optimal learning rate shifts to the left only by a factor of $2$; which still shows effective learning rate transfer. 
\subsection{Learning Rate Transfer On ResNet}
We use the standard ResNet architecture \citep{he2016resnet} and equip it with the CIFAR100 dataset\citep{krizhevsky2009learning}. Below are the detailed configurations:
\begin{enumerate}
    \item \textbf{Model} We train on CIFAR-adapted ResNets with standard two-convolution BasicBlocks. We use stage configurations of $[2,2,2,2]$,$[3,4,6,3]$ and $[8,8,8,8]$ corresponding to ResNet18, ResNet34, and ResNet66 with depths of $K \in \{8,16,32\}$ residual blocks respectively. We train on the MOGA optimizer with $p\in \{1.5,2,2.5\}$. We train on CIFAR-100, splitting its 50,000 training examples into 45,000 training examples and 5,000 validation examples using a fixed split seed. Training images are augmented using random \(32\times32\) crops with four-pixel padding and random horizontal flipping. All images are normalized using the CIFAR-100 channel statistics.
    \item \textbf{Learning-rate Schedule and Training Configuration}. We train every model with batch size 64 for 20 epochs, corresponding to 703 steps per epoch and a total of 14060 updates. We use one epoch of linear warm up and cosine decay to $50\%$ of the peak learning rate. We sweep across six learning rates $\{4\times10^{-4},1\times 10^{-3},2\times 10^{-3},4\times 10^{-3},1\times 10^{-2},2\times 10^{-2}\}$
\end{enumerate}
\begin{figure}[t]
    \centering
    \includegraphics[width=0.9\linewidth]
        {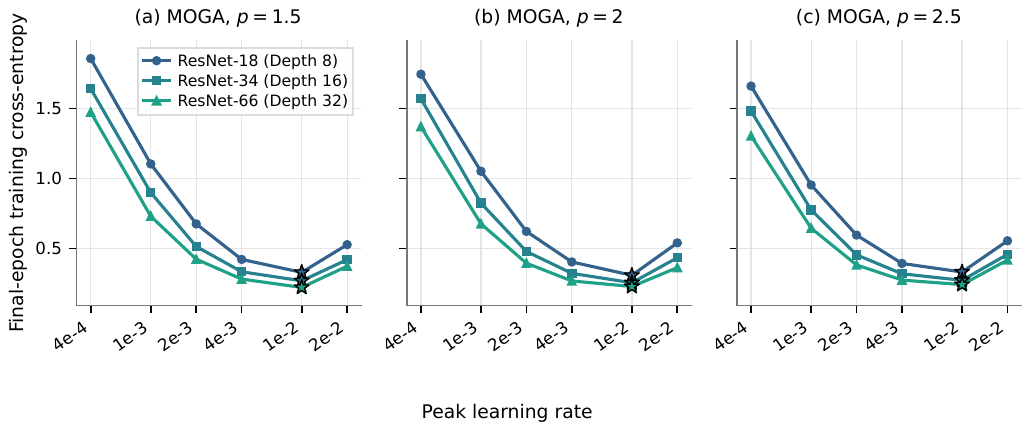}
    \caption{
        Learning-rate transfer on CIFAR-100 with depth-scaled ResNets.
        Panels show MOGA with $p\in\{1.5,2,2.5\}$.
        Curves report final-epoch training cross-entropy for
        ResNet-18, ResNet-34, and ResNet-66, corresponding to
        $K\in\{8,16,32\}$ residual blocks.
        Stars indicate the lowest loss among the tested
        learning rates for each model.
    }
    \label{fig:resnet-train-transfer-1}
\end{figure}

In figure~\ref{fig:resnet-train-transfer-1}, the three selected models select the same optimal learning rate over all 3 variants of the MOGA optimizer. An optimal learning rate tuned on ResNet-18 can be directly transferred to ResNet-66. Another particularly strong result is that all three variants showed matching stable learning rate areas that are also robust in depth. Again, these results support our claim that MOGA consistently transfers learning rates for different choices of $p$. These results also provide empirical evidence that our scaling prescriptions work for different architectures; in total we have tested GPT-2, ResNet, and residual MLPs for our baseline.

\section{Conclusion}
In this paper, we adopt the matrix-operator norm steepest descent framework from \citet{xu2026moga} to investigate how optimizer geometry can yield depthwise learning rate scaling rules. We show that many state-of-the-art optimizers- such as AdamW and MUON- can be studied through associated matrix-operator norm geometries. However, our analysis shows that neural networks are not depth-independent under the standard branch scaling; the worst case Lipschitz bound becomes unstable as network depth increases. 

We propose inverse-depth branch scaling to bound aggregate sensitivity to layer updates uniformly in depth and hence induce an L-Lipschitz constant that is independent of depth. Our further analysis of gradient sensitivity shows that the objective loss function under inverse-depth branch scaling sufficiently induces a depth-independent L-smooth constant which motivates a learning rate scale without an additional depth-dependent curvature correction. Our experiments evaluate the resulting learning-rate
transfer for practical optimizers.

Our experiments of large-scale pre-training tasks on GPT-2 and image classification tasks on ResNet show that optimal learning rates stay nearly invariant on the tested state-of-the-art optimizers, which allows models over a 64-fold depth range to effectively transfer learning rates from small models to deeper models, reducing the computation budget of hyperparameter tuning.

Our analysis establishes inverse-depth residual scaling as a worst-case guarantee for depth-uniform sensitivity and smoothness bounds. Larger residual multipliers can also support learning-rate transfer in certain settings. For example, Depth-\ensuremath{\mu}P \citep{yang2023depthmup} uses ($K^{-1/2}$) residual scaling and demonstrates transfer in residual networks with single-layer branches under its parameterization. Our present bounds do not explain this behavior. Extending the analysis to account for the architectural or statistical structure that enables transfer in these special settings is a natural direction for future work.
\newpage
\clearpage
\nocite{*}
\bibliographystyle{cit_style}
\bibliography{references}
\newpage
\appendix
\section{Proof of the Main Theorem}
\subsection{Corollaries}
\begin{corollary}[Forward Boundedness]\label{cor:fwd}
For all \(k=0,\ldots,K\),
\begin{align*}
\Hnorm{y_k} \le C_y := e^{\beta L_f} \Bigl( C_x + \frac{L_f C_x + C_f}{L_f} \Bigr).
\end{align*}
\end{corollary}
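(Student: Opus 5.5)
The plan is a discrete Grönwall argument applied to the forward recursion $y_k = y_{k-1} + \alpha f_{W_k}(y_{k-1})$ with $\alpha=\beta/K$. The first step is a linear-growth bound on each branch at the current state, anchored at the input $y_0=x$. By Assumption 2.1 (Lipschitz in the hidden state) and Assumption 2.6 (bounded output at $y_0$), for every $k$
\begin{align*}
\Hnorm{f_{W_k}(y_{k-1})} \le \Hnorm{f_{W_k}(y_0)} + L_f\Hnorm{y_{k-1}-y_0} \le C_f + L_f\Hnorm{y_{k-1}} + L_f C_x ,
\end{align*}
using $\Hnorm{y_0}=\Hnorm{x}\le C_x$ from \cref{as1}.

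Next I will substitute this into the recursion to obtain an affine inequality for $a_k:=\Hnorm{y_k}$:
\begin{align*}
a_k \le a_{k-1} + \alpha\bigl(L_f a_{k-1} + L_f C_x + C_f\bigr) = (1+\alpha L_f)a_{k-1} + \alpha (L_f C_x + C_f).
\end{align*}
To linearize, I shift by the fixed point: set $b_k := a_k + c$ with $c := (L_fC_x+C_f)/L_f$. Since $\alpha L_f\, c = \alpha(L_f C_x + C_f)$, this gives $b_k \le (1+\alpha L_f) b_{k-1}$, and iterating,
\begin{align*}
b_k \le (1+\alpha L_f)^k b_0 \le (1+\beta L_f/K)^K (C_x + c) \le e^{\beta L_f}(C_x+c).
\end{align*}
Dropping $c\ge 0$ from the left side then yields $\Hnorm{y_k}\le e^{\beta L_f}\bigl(C_x + \frac{L_fC_x+C_f}{L_f}\bigr) = C_y$ for all $0\le k\le K$. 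The case $k=0$ is trivial, since $e^{\beta L_f}\ge 1$.

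No step here is a real obstacle; the only care points are two. First, Assumption 2.6 must be applied at the fixed point $y_0$ for each branch $k$, which the statement allows since it holds for every $k$. Second, the estimate $(1+x/K)^K\le e^x$ is what delivers depth independence; it is exactly where $\alpha=\beta/K$ is used. The bound is uniform in $W$, because the constants in \cref{as2} are uniform over $W_k\in P$.
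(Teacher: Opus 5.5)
Your proof is correct and follows essentially the paper's argument: the same linear-growth bound on $f_{W_k}(y_{k-1})$ anchored at $y_0$ via Assumptions 2.1 and 2.6, the same affine recursion $a_k\le(1+\alpha L_f)a_{k-1}+\alpha(L_fC_x+C_f)$, and the same use of $(1+\beta L_f/K)^K\le e^{\beta L_f}$. The only difference is cosmetic. You absorb the inhomogeneous term by shifting with the fixed point $c=(L_fC_x+C_f)/L_f$. The paper instead unrolls the recursion and bounds $\alpha\sum_{i<k}(1+\alpha L_f)^i=\bigl((1+\alpha L_f)^k-1\bigr)/L_f$, a step it leaves implicit and your shift makes explicit.
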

\begin{proof}
By \cref{as1} and \cref{as2},
\begin{align*}
\Hnorm{f_{W_k}(y_{k-1})}
&\le \Hnorm{f_{W_k}(y_{k-1})-f_{W_k}(y_0)} + \Hnorm{f_{W_k}(y_0)}
\le L_f \Hnorm{y_{k-1}-y_0} + C_f \\
&\le L_f \Hnorm{y_{k-1}} + L_f C_x + C_f.
\end{align*}
Let \(u_k := \Hnorm{y_k}\). Then
\begin{align*}
u_k \le (1+\alpha L_f) u_{k-1} + \alpha (L_f C_x + C_f).
\end{align*}
Unrolling,
\begin{align*}
u_k \le (1+\alpha L_f)^k u_0 + \alpha (L_f C_x + C_f) \sum_{i=0}^{k-1} (1+\alpha L_f)^i.
\end{align*}
Since \(u_0 \le C_x\) and \((1+\alpha L_f)^k \le e^{\beta L_f}\),
\begin{align*}
u_k \le e^{\beta L_f} \Bigl( C_x + \frac{L_f C_x + C_f}{L_f} \Bigr).
\end{align*}
\end{proof}

\begin{corollary}[Jacobian Bound]\label{cor:jac}
For all \(k=1,\ldots,K\), with \(J_k := I + \alpha \partial_y f_{W_k}(y_{k-1})\),
\begin{align*}
\Hopnorm{J_k} \le 1+\alpha L_f.
\end{align*}
\end{corollary}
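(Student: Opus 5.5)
The plan is to combine the triangle inequality for the operator norm with a bound on the derivative of $f_{W_k}$ in the hidden state, obtained from Assumption~2.1. Write $A_k := \partial_y f_{W_k}(y_{k-1})$, so that $J_k = I + \alpha A_k$. Since $\alpha = \beta/K > 0$ and $\Hopnorm{I}=1$, we get
\begin{align*}
\Hopnorm{J_k} \le 1 + \alpha \Hopnorm{A_k}.
\end{align*}
It therefore suffices to show $\Hopnorm{A_k}\le L_f$.

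To bound $A_k$, use Fréchet differentiability of $f_{W_k}$ at $y_{k-1}$. Fix $v\in H$ with $\Hnorm{v}\le 1$ and $t>0$. By Assumption~2.1,
\begin{align*}
\Hnorm{f_{W_k}(y_{k-1}+tv)-f_{W_k}(y_{k-1})} \le L_f t \Hnorm{v}.
\end{align*}
Write the left-hand difference as $tA_k v + r(t)$ with $\Hnorm{r(t)}=o(t)$. The reverse triangle inequality gives
\begin{align*}
t\Hnorm{A_k v} \le L_f t\Hnorm{v} + \Hnorm{r(t)}.
\end{align*}
Dividing by $t$ and letting $t\to 0^+$ yields $\Hnorm{A_k v}\le L_f\Hnorm{v}$. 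Taking the supremum over the unit ball gives $\Hopnorm{A_k}\le L_f$, and substituting into the first display proves the corollary.

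There is no real obstacle. The only point to state explicitly is the passage from the Lipschitz bound to the operator-norm bound on the derivative. This works for any point of $H$, so it needs no control of $y_{k-1}$ and does not use \cref{cor:fwd}. It also uses only that the derivative is a bounded linear map, which follows from the assumed $C^2$ Fréchet regularity. The bound is uniform in $k$ and $W_k$, which is what the later product estimate $\prod_k(1+\beta L_f/K)\le e^{\beta L_f}$ will need.
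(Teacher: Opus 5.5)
Your proof is correct and follows the paper's route. You apply the triangle inequality to $J_k = I+\alpha\,\partial_y f_{W_k}(y_{k-1})$ and bound $\Hopnorm{\partial_y f_{W_k}(y)}\le L_f$ by dividing the Assumption~2.1 Lipschitz estimate along $y+tv$ by $t$ and letting $t\to 0^+$, which is exactly how the paper justifies this derivative bound in \cref{cor:deriv}; the step you call the reverse triangle inequality is just the ordinary triangle inequality applied to $tA_kv = \bigl(f_{W_k}(y_{k-1}+tv)-f_{W_k}(y_{k-1})\bigr) - r(t)$, and the estimate you get from it is right.
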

\begin{proof}
By \cref{as2} and the mean value theorem,
\begin{align*}
\Hopnorm{\partial_y f_{W_k}(y_{k-1})} \le L_f.
\end{align*}
Hence
\begin{align*}
\Hopnorm{J_k} \le \Hopnorm{I} + \alpha \Hopnorm{\partial_y f_{W_k}(y_{k-1})} \le 1+\alpha L_f.
\end{align*}
\end{proof}

\begin{corollary}[Jacobian Product Bound]\label{cor:jacprod}
For all \(1\le j\le k\le K\),
\begin{align*}
\Bigl\| J_k J_{k-1} \cdots J_j \Bigr\|_{H\to H} \le (1+\alpha L_f)^{k-j+1} \le e^{\beta L_f} =: C_J.
\end{align*}
\end{corollary}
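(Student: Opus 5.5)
The bound follows by combining submultiplicativity of the operator norm with the single-factor estimate of \cref{cor:jac}, and then trading the power for an exponential using $\alpha=\beta/K$.

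First, since $\Hopnorm{\cdot}$ is an operator norm induced by $\Hnorm{\cdot}$, it is submultiplicative: $\Hopnorm{AB}\le\Hopnorm{A}\,\Hopnorm{B}$ for bounded linear maps $A,B:H\to H$. Applying this repeatedly, or by a short induction on $k-j$, gives
\begin{align*}
\Bigl\| J_k J_{k-1}\cdots J_j\Bigr\|_{H\to H}\le \prod_{i=j}^{k}\Hopnorm{J_i}\le (1+\alpha L_f)^{k-j+1},
\end{align*}
where each factor is bounded by \cref{cor:jac}. That corollary applies uniformly in $i$ because it uses only \cref{as2}.1 (through the mean value theorem) and does not depend on where $y_{i-1}$ lies.

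Second, note that $1+\alpha L_f\ge 1$, so the power is nondecreasing in the exponent. Since $k-j+1\le K$, we get $(1+\alpha L_f)^{k-j+1}\le(1+\alpha L_f)^K$. Using $1+t\le e^t$ with $t=\alpha L_f=\beta L_f/K$, we obtain
\begin{align*}
(1+\beta L_f/K)^K\le e^{\beta L_f}=C_J,
\end{align*}
which does not depend on $K$.

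There is no real obstacle here. The only points to check are that the operator norm is genuinely submultiplicative, which holds for induced norms, and that $\alpha L_f\ge 0$, so that the monotone step in the exponent is valid. The argument does not use the ordering or the boundedness of the hidden states, so the estimate holds for every $1\le j\le k\le K$ simultaneously.
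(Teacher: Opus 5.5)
Your proof is correct and follows the paper's argument. The paper also combines submultiplicativity of the induced norm with the single-factor bound of \cref{cor:jac}, then uses $(1+\alpha L_f)^{k-j+1}\le(1+\beta L_f/K)^K\le e^{\beta L_f}$, and your explicit check that $\alpha L_f\ge 0$ makes precise a step the paper leaves implicit.
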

\begin{proof}
By \cref{cor:jac},
\begin{align*}
\Bigl\| J_k J_{k-1} \cdots J_j \Bigr\|_{H\to H} \le \prod_{i=j}^{k} \Hopnorm{J_i} \le (1+\alpha L_f)^{k-j+1}.
\end{align*}
Substituting \(\alpha = \beta/K\) and using \((1+x/K)^K \le e^x\) for \(x\ge 0\),
\begin{align*}
(1+\alpha L_f)^{k-j+1} \le (1+\beta L_f/K)^K \le e^{\beta L_f}.
\end{align*}
\end{proof}

\begin{corollary}[Derivative Bounds Implied by Assumption 2]\label{cor:deriv}
Under \cref{as2}, the following bounds hold for all \(k=1,\ldots,K\) and all \(y\in H\):
\begin{align}
&\Hopnorm{\partial_y f_{W_k}(y)} \le L_f, \label{eq:bound1}\\
&\Hnorm{\partial_y^2 f_{W_k}(y)[u,v]} \le M_f \Hnorm{u} \Hnorm{v}, \label{eq:bound2}\\
&\PHopnorm{\partial_{W_k} f_{W_k}(y)} \le C_W, \label{eq:bound3}\\
&\Hnorm{\partial_y \partial_{W_k} f_{W_k}(y)[h,w]} \le M_{Wf} \Hnorm{h} \Pnorm{w}, \label{eq:bound4}\\
&\Hnorm{\partial_{W_k}^2 f_{W_k}(y)[u,w]} \le M_{WW} \Pnorm{u} \Pnorm{w}. \label{eq:bound5}
\end{align}
\end{corollary}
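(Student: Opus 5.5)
Each of the five bounds follows from the corresponding Lipschitz-type condition in \cref{as2}: a derivative of a map whose increments are controlled linearly has operator norm bounded by the same constant. I would prove one generic fact and apply it five times. The fact is: if $g:X\to Z$ is Fréchet differentiable at $x$ and $\|g(x+th)-g(x)\|_Z\le c\,t\|h\|_X$ for all small $t>0$, then $\|Dg(x)[h]\|_Z\le c\|h\|_X$. This holds because $Dg(x)[h]=\lim_{t\to0^+}(g(x+th)-g(x))/t$, and norms are continuous. Twice continuous Fréchet differentiability of $f_k$ is assumed throughout, so every derivative in the statement exists and is a bounded multilinear map. It therefore suffices to bound its action on arbitrary directions.

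\eqref{eq:bound1} takes $g=f_{W_k}$ and uses Assumption 2.1 with $y_1=y+th$, $y_2=y$, giving $\Hnorm{\partial_y f_{W_k}(y)[h]}\le L_f\Hnorm{h}$. This is the same mean-value argument already invoked in \cref{cor:jac}. \eqref{eq:bound3} takes $g(W)=f_W(y)$ at fixed $y$ and uses Assumption 2.3 with $W_k'=W_k+tw$, giving $\Hnorm{\partial_{W_k}f_{W_k}(y)[w]}\le C_W\Pnorm{w}$; taking the supremum gives the $P\to H$ operator norm. \eqref{eq:bound2} applies the fact to $g(y)=\partial_y f_{W_k}(y)\in\mathcal{B}(H,H)$, using Assumption 2.2. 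This yields $\Hopnorm{\partial^2_y f_{W_k}(y)[u,\cdot]}\le M_f\Hnorm{u}$, and evaluating at $v$ gives the bilinear bound. \eqref{eq:bound4} applies the fact to $g(W)=\partial_y f_W(y)$ with Assumption 2.4, which yields $\Hopnorm{\partial_W\partial_y f(y)[w]}\le M_{Wf}\Pnorm{w}$. Applied to $h$, and using symmetry of mixed second derivatives for $C^2$ maps (Schwarz's theorem in Banach spaces), this identifies the result with $\partial_y\partial_{W_k}f_{W_k}(y)[h,w]$. \eqref{eq:bound5} fixes the direction $u$ and applies the fact to $g(W)=\partial_W f_W(y)[u]$ with Assumption 2.5, giving $\Hnorm{\partial^2_W f(y)[w,u]}\le M_{WW}\Pnorm{w}\Pnorm{u}$. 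Symmetry of $\partial^2_W f$ then puts the arguments in the stated order.

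The main point needing care is the second-order cases, (4) in particular. The assumptions bound differences of first derivatives, so I must check that the limit of difference quotients really is the mixed second derivative in the stated argument order. This is where the $C^2$ hypothesis enters: it guarantees that the derivative of $W\mapsto\partial_y f_W(y)$ exists and equals the transposed mixed partial. If one prefers to avoid appealing to Schwarz symmetry, the bilinear form $\partial_y\partial_W f[h,w]$ can be read directly as the derivative in $W$ of $\partial_y f_W(y)[h]$, matching Assumption 2.4 as written. The rest is routine.
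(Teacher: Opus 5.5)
Your proposal is correct and follows essentially the same route as the paper: in each Lipschitz condition of Assumption 2, take the increment to be $th$ (or $tw$), divide by $t$, let $t\to 0$, and take the supremum over directions, once per bound. You also appeal explicitly to symmetry of second derivatives of the $C^2$ map $f_k$ to fix the argument order in the mixed and $W$--$W$ bounds; the paper leaves this detail implicit, and it does not change the argument.
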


\begin{proof}
We prove \eqref{eq:bound1} in detail; the remaining bounds follow by the same argument applied to the corresponding Lipschitz condition in \cref{as2}.

By \cref{as2}, \(f_{W_k}\) is \(L_f\)-Lipschitz in \(y\):
\begin{align*}
\Hnorm{f_{W_k}(y_1)-f_{W_k}(y_2)} \le L_f \Hnorm{y_1-y_2}, \qquad \forall y_1,y_2\in H.
\end{align*}
Fix \(y\in H\) and \(h\in H\). For any \(t>0\), set \(y_1 = y+th\) and \(y_2 = y\). Then
\begin{align*}
\Hnorm{f_{W_k}(y+th)-f_{W_k}(y)} \le L_f \Hnorm{th} = L_f \, t \, \Hnorm{h}.
\end{align*}
Dividing by \(t\) gives
\begin{align*}
\Hnorm{\frac{f_{W_k}(y+th)-f_{W_k}(y)}{t}} \le L_f \Hnorm{h}.
\end{align*}
Taking the limit as \(t\to 0\) and using the definition of the directional derivative,
\begin{align*}
\Hnorm{\partial_y f_{W_k}(y)[h]} \le L_f \Hnorm{h}.
\end{align*}
Taking the supremum over all \(h\in H\) with \(\Hnorm{h}\le 1\),
\begin{align*}
\Hopnorm{\partial_y f_{W_k}(y)}
= \sup_{\Hnorm{h}\le 1} \Hnorm{\partial_y f_{W_k}(y)[h]}
\le L_f.
\end{align*}
This proves \eqref{eq:bound1}.

For \eqref{eq:bound2}, apply the same argument to the \(M_f\)-Lipschitz condition on \(\partial_y f_{W_k}\). For \eqref{eq:bound3}, use the \(C_W\)-Lipschitz condition on \(f_{W_k}\) in \(W_k\). For \eqref{eq:bound4}, use the \(M_{Wf}\)-Lipschitz condition on \(\partial_y f_{W_k}\) in \(W_k\). For \eqref{eq:bound5}, use the \(M_{WW}\)-Lipschitz condition on \(\partial_{W_k} f_{W_k}[\Delta U_k]\) in \(W_k\).
\end{proof}

\subsection{First-Order Directional Derivative Bound}
To simplify our notation, we define:
\begin{definition}[Directional Derivatives]\label{def:directional}
For \(k=0,\ldots,K\) and \(j,\ell=1,\ldots,K\), define
\begin{align*}
s_k^{(j,U_j)} := \nabla_{W_j} y_k [\Delta U_j] \in H,
\end{align*}
\begin{align*}
h_k^{(j,\ell;U_j,V_\ell)} := \nabla^2_{W_j,W_\ell} y_k [\Delta U_j,\Delta V_\ell] \in H.
\end{align*}
\end{definition}

Then for all \(j=1,\ldots,K\), we have:
\begin{align*}
\Hnorm{s_K^{(j,U_j)}} \le \frac{C_s}{K}, \qquad C_s := C_J \beta C_W.
\end{align*}
Consequently,
\begin{align*}
\Bigl\| \sum_{j=1}^{K} s_K^{(j,U_j)} \Bigr\|_H \le C_s.
\end{align*}

\begin{proof}
Differentiating \(y_k = y_{k-1} + \alpha f_{W_k}(y_{k-1})\) with respect to \(W_j\) in direction \(\Delta U_j\),
\begin{align*}
s_k^{(j,U_j)} =
\begin{cases}
J_k s_{k-1}^{(j,U_j)}, & j<k,\\
\alpha \partial_{W_k} f_{W_k}(y_{k-1})[\Delta U_k], & j=k,\\
0, & j>k.
\end{cases}
\end{align*}
Since \(s_0^{(j,U_j)}=0\), we have \(s_j^{(j,U_j)} = \alpha \partial_{W_j} f_{W_j}(y_{j-1})[\Delta U_j]\), and iterating,
\begin{align*}
s_K^{(j,U_j)} = J_K J_{K-1} \cdots J_{j+1} \, \alpha \partial_{W_j} f_{W_j}(y_{j-1})[\Delta U_j].
\end{align*}
By \cref{cor:deriv} (see \eqref{eq:bound3}),
\begin{align*}
\Hnorm{\partial_{W_j} f_{W_j}(y_{j-1})[\Delta U_j]} \le C_W \Pnorm{\Delta U_j} \le C_W.
\end{align*}
By \cref{cor:jacprod},
\begin{align*}
\Hnorm{s_K^{(j,U_j)}} \le C_J \alpha C_W = \frac{C_s}{K}.
\end{align*}
Summing over \(j\),
\begin{align*}
\Bigl\| \sum_{j=1}^{K} s_K^{(j,U_j)} \Bigr\|_H \le \sum_{j=1}^{K} \Hnorm{s_K^{(j,U_j)}} \le K \cdot \frac{C_s}{K} = C_s.
\end{align*}
\end{proof}

\begin{corollary}[Uniform Bound on First-Order Directional Derivatives]
\label{cor:s-general}
From the proof above, it follows immediately that for all \(1\le j\le k\le K\),
\begin{align*}
\Hnorm{s_k^{(j,U_j)}} \le \frac{C_s}{K}, \qquad C_s := C_J \beta C_W.
\end{align*}
In particular, for \(k=K\),
\begin{align*}
\Hnorm{s_K^{(j,U_j)}} \le \frac{C_s}{K}, \qquad
\Bigl\| \sum_{j=1}^{K} s_K^{(j,U_j)} \Bigr\|_H \le C_s.
\end{align*}
\end{corollary}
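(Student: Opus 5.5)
The bound for general $k$ follows from the same recursion used in the proof just given, stopping the propagation at layer $k$ rather than at layer $K$. Differentiate $y_k = y_{k-1}+\alpha f_{W_k}(y_{k-1})$ with respect to $W_j$ in direction $\Delta U_j$. This gives $s_k^{(j,U_j)}=0$ for $k<j$ and the source term $s_j^{(j,U_j)}=\alpha\,\partial_{W_j}f_{W_j}(y_{j-1})[\Delta U_j]$. For $k>j$ it gives the homogeneous propagation $s_k^{(j,U_j)}=J_k s_{k-1}^{(j,U_j)}$, because the parameter $W_j$ does not enter the branch $f_{W_k}$ for $k\ne j$. Unrolling from $j$ up to $k$ yields
\begin{align*}
s_k^{(j,U_j)} = J_k J_{k-1}\cdots J_{j+1}\,\alpha\,\partial_{W_j}f_{W_j}(y_{j-1})[\Delta U_j],
\end{align*}
where the product is the identity when $k=j$.

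Next I bound the two factors separately. By \eqref{eq:bound3} of \cref{cor:deriv} together with \cref{as1}, the source satisfies $\Hnorm{\partial_{W_j}f_{W_j}(y_{j-1})[\Delta U_j]}\le C_W\Pnorm{\Delta U_j}\le C_W$. By \cref{cor:jacprod}, the product satisfies $\|J_k\cdots J_{j+1}\|_{H\to H}\le(1+\alpha L_f)^{k-j}\le e^{\beta L_f}=C_J$. The empty case $k=j$ is also covered, since $1\le C_J$. Multiplying the two bounds gives $\Hnorm{s_k^{(j,U_j)}}\le C_J\,\alpha\,C_W = C_J\beta C_W/K = C_s/K$, uniformly in $1\le j\le k\le K$.

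The statement about $k=K$ is just the special case $k=K$ of this bound. The triangle inequality over the $K$ summands then gives $\bigl\|\sum_{j=1}^K s_K^{(j,U_j)}\bigr\|_H\le K\cdot C_s/K=C_s$. This is where the $1/K$ factor from $\alpha=\beta/K$ cancels the $K$-fold summation.

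None of these steps is really an obstacle. The one point to check is that the Jacobian-product bound in \cref{cor:jacprod} holds for an arbitrary window $[j+1,k]$ and not only for windows ending at $K$. It does, because the exponent $k-j$ is at most $K$, so the estimate $(1+\beta L_f/K)^{K}\le e^{\beta L_f}$ still applies.
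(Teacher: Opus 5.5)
Your proposal is correct and follows the paper's argument essentially verbatim. Like the paper, you differentiate the recursion, unroll it (to layer $k$ instead of $K$), bound the source by $\alpha C_W$ and the Jacobian product by $C_J$, and then apply the triangle inequality over the $K$ summands. Your explicit checks of the empty-product case $k=j$ (using $C_J\ge 1$) and of arbitrary windows in \cref{cor:jacprod} supply exactly the details the paper leaves under ``follows immediately.''
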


\subsection{Second-Order Directional Derivative Bound}
For all \(1\le j,\ell\le K\),
\begin{align*}
\Hnorm{h_K^{(j,\ell;U_j,V_\ell)}} \le
\begin{cases}
C_{h1}/K^2, & j\neq \ell,\\
C_{h2}/K, & j=\ell,
\end{cases}
\end{align*}
with \(C_{h1},C_{h2}=\Theta(1)\). Consequently,
\begin{equation}
\Bigl\| \sum_{j=1}^{K} \sum_{\ell=1}^{K} h_K^{(j,\ell;U_j,V_\ell)} \Bigr\|_H \le C_h.
\label{eq:second-order-sum}
\end{equation}

\begin{proof}
We proceed in three stages: derive the recursion, expand to layer \(K\), and estimate.

\paragraph{Stage 1: Recursion.}
Differentiating the recursion for \(s_k^{(j,U_j)}\) with respect to \(W_\ell\) in direction \(\Delta V_\ell\),
\begin{align*}
h_k^{(j,\ell;U_j,V_\ell)}
&= J_k h_{k-1}^{(j,\ell;U_j,V_\ell)}
+ \alpha \partial_y^2 f_{W_k}(y_{k-1})[s_{k-1}^{(j,U_j)}, s_{k-1}^{(\ell,V_\ell)}]\\
&\quad + \alpha \mathbf{1}_{j=k} \partial_y \partial_{W_k} f_{W_k}(y_{k-1})[s_{k-1}^{(\ell,V_\ell)}, \Delta U_k]\\
&\quad + \alpha \mathbf{1}_{\ell=k} \partial_y \partial_{W_k} f_{W_k}(y_{k-1})[s_{k-1}^{(j,U_j)}, \Delta V_k]\\
&\quad + \alpha \mathbf{1}_{j=k} \mathbf{1}_{\ell=k} \partial^2_{W_k} f_{W_k}(y_{k-1})[\Delta U_k, \Delta V_k].
\end{align*}
Since \(h_0^{(j,\ell;U_j,V_\ell)}=0\) and \(s_{k-1}^{(j,U_j)}=0\) for \(j\ge k\), we have:
\begin{itemize}
\item Case A (\(j,\ell<k\)): \(h_k = J_k h_{k-1} + \alpha \partial_y^2 f[s_{k-1}^{(j)}, s_{k-1}^{(\ell)}]\).
\item Case B (\(j=k,\ell<k\)): \(h_k = \alpha \partial_y \partial_{W_k} f[s_{k-1}^{(\ell)}, \Delta U_k]\).
\item Case C (\(\ell=k,j<k\)): \(h_k = \alpha \partial_y \partial_{W_k} f[s_{k-1}^{(j)}, \Delta V_k]\).
\item Case D (\(j=\ell=k\)): \(h_k = \alpha \partial^2_{W_k} f[\Delta U_k, \Delta V_k]\).
\item Case E (\(j>k\) or \(\ell>k\)): \(h_k = 0\).
\end{itemize}

\paragraph{Stage 2: Closed-form expansion.}
We unroll the recursion from the first nonzero layer to layer \(K\).

\emph{Case \(j<\ell\).} The first nonzero term occurs at \(k=\ell\) in Case C:
\begin{align*}
h_\ell^{(j,\ell;U_j,V_\ell)} = \alpha \partial_y \partial_{W_\ell} f_{W_\ell}(y_{\ell-1})[s_{\ell-1}^{(j,U_j)}, \Delta V_\ell].
\end{align*}
For \(k=\ell+1,\ldots,K\), Case A adds the source term
\(\alpha \partial_y^2 f_{W_k}[s_{k-1}^{(j,U_j)}, s_{k-1}^{(\ell,V_\ell)}]\).

Iterating,
\begin{align*}
h_K^{(j,\ell;U_j,V_\ell)}
&= J_K J_{K-1} \cdots J_{\ell+1} \, \alpha \partial_y \partial_{W_\ell} f_{W_\ell}(y_{\ell-1})[s_{\ell-1}^{(j,U_j)}, \Delta V_\ell]\\
&\quad + \sum_{k=\ell+1}^{K} J_K J_{K-1} \cdots J_{k+1} \, \alpha \partial_y^2 f_{W_k}(y_{k-1})[s_{k-1}^{(j,U_j)}, s_{k-1}^{(\ell,V_\ell)}].
\end{align*}

\emph{Case \(j>\ell\).} By symmetry, \(h_K^{(j,\ell;U_j,V_\ell)} = h_K^{(\ell,j;V_\ell,U_j)}\).

\emph{Case \(j=\ell>0\).} The first nonzero term occurs at \(k=j\) in Case D:
\begin{align*}
h_j^{(j,j;U_j,V_j)} = \alpha \partial^2_{W_j} f_{W_j}(y_{j-1})[\Delta U_j, \Delta V_j].
\end{align*}
Iterating,
\begin{align*}
h_K^{(j,j;U_j,V_j)}
&= J_K J_{K-1} \cdots J_{j+1} \, \alpha \partial^2_{W_j} f_{W_j}(y_{j-1})[\Delta U_j, \Delta V_j]\\
&\quad + \sum_{k=j+1}^{K} J_K J_{K-1} \cdots J_{k+1} \, \alpha \partial_y^2 f_{W_k}(y_{k-1})[s_{k-1}^{(j,U_j)}, s_{k-1}^{(j,V_j)}].
\end{align*}

\paragraph{Stage 3: Estimates.}
By \cref{cor:deriv} and the bound on \(s_{k-1}^{(j,U_j)}\) in \cref{cor:s-general}, we have:
\begin{align*}
\Hnorm{\alpha \partial_y^2 f_{W_k}[s_{k-1}^{(j,U_j)}, s_{k-1}^{(\ell,V_\ell)}]}
&\le \alpha M_f \frac{C_s^2}{K^2} = \frac{C_{yy}}{K^3},\\
\Hnorm{\alpha \partial_y \partial_{W_\ell} f_{W_\ell}[s_{\ell-1}^{(j,U_j)}, \Delta V_\ell]}
&\le \alpha M_{Wf} \frac{C_s}{K} = \frac{C_{yW}}{K^2},\\
\Hnorm{\alpha \partial^2_{W_j} f_{W_j}[\Delta U_j, \Delta V_j]}
&\le \alpha M_{WW} = \frac{C_{WW}}{K}.
\end{align*}
By \cref{cor:jacprod}, \(\|J_K \cdots J_{k+1}\|_{H\to H} \le C_J\).

\emph{Case \(j\ne \ell\).} Using the expansion for \(j<\ell\) (the case \(j>\ell\) is symmetric),
\begin{align*}
\Hnorm{h_K^{(j,\ell;U_j,V_\ell)}}
&\le C_J \frac{C_{yW}}{K^2} + \sum_{k=\ell+1}^{K} C_J \frac{C_{yy}}{K^3}
\le \frac{C_J C_{yW}}{K^2} + K \frac{C_J C_{yy}}{K^3}
= \frac{C_J(C_{yW}+C_{yy})}{K^2} =: \frac{C_{h1}}{K^2}.
\end{align*}

\emph{Case \(j=\ell\).} For \(j>0\),
\begin{align*}
\Hnorm{h_K^{(j,j;U_j,V_j)}}
&\le C_J \frac{C_{WW}}{K} + \sum_{k=j+1}^{K} C_J \frac{C_{yy}}{K^3}
\le \frac{C_J C_{WW}}{K} + \frac{C_J C_{yy}}{K^2}
\le \frac{C_J(C_{WW}+C_{yy})}{K} =: \frac{C_{h2}}{K}.
\end{align*}

There are \(K(K-1)\) pairs with \(j\ne\ell\) and \(K\) pairs with \(j=\ell\):
\begin{align*}
\Bigl\| \sum_{j=1}^{K} \sum_{\ell=1}^{K} h_K^{(j,\ell;U_j,V_\ell)} \Bigr\|_H
\le K(K-1) \frac{C_{h1}}{K^2} + K \frac{C_{h2}}{K}
\le C_{h1} + C_{h2} =: C_h.
\end{align*}
\end{proof}

\subsection{Lemmas}
\begin{lemma}[Directional Derivative Lipschitz Bound]\label{lem:lipschitz}
Let \(f:\mathbb{R}^{m\times n}\to\mathbb{R}\) be differentiable and \(\|\cdot\|_B\) be any matrix norm. If for all \(Z\) on the segment between \(Z^1,Z^2\) and all \(\|\Delta U\|_B\le 1\), \(|\nabla f(Z)[\Delta U]|\le L\), then
\begin{align*}
|f(Z^1)-f(Z^2)| \le L \|Z^1-Z^2\|_B.
\end{align*}
\end{lemma}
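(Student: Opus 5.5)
The plan is to reduce the statement to the one-dimensional fundamental theorem of calculus along the segment joining $Z^2$ to $Z^1$, and then use homogeneity of the directional derivative together with the hypothesis on unit-norm directions. Set $D := Z^1 - Z^2$ and define the scalar function $\phi(t) := f(Z^2 + tD)$ for $t\in[0,1]$. Since $f$ is differentiable, $\phi$ is differentiable on $[0,1]$. By the chain rule, $\phi'(t) = \nabla f(Z^2+tD)[D]$, and every point $Z^2+tD$ lies on the segment, so the hypothesis applies there.

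First dispose of the trivial case $D=0$, where both sides vanish. Otherwise put $\Delta U := D/\|D\|_B$, which satisfies $\|\Delta U\|_B = 1$. By linearity of the directional derivative in its direction argument,
\begin{align*}
|\phi'(t)| = \|D\|_B\,\bigl|\nabla f(Z^2+tD)[\Delta U]\bigr| \le L\,\|D\|_B
\end{align*}
for every $t\in[0,1]$.

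To conclude, I would use the mean value theorem rather than writing $\phi(1)-\phi(0)=\int_0^1\phi'(t)\,dt$. The integral form needs $\phi'$ to be integrable, which bare differentiability does not guarantee; in the paper's setting $f$ is $C^2$, so the integral form would also be fine there. The mean value theorem gives some $\tau\in(0,1)$ with $f(Z^1)-f(Z^2) = \phi(1)-\phi(0) = \phi'(\tau)$. Hence $|f(Z^1)-f(Z^2)| \le L\|Z^1-Z^2\|_B$, which is the claim.

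There is essentially no hard step. The only points needing care are the chain rule for $\phi$ and the homogeneity rescaling of the direction.
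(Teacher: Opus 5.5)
Your proof is correct and follows the paper's argument: parametrize the segment $Z^2+t(Z^1-Z^2)$, rescale the direction to unit $\|\cdot\|_B$-norm, and bound the one-dimensional derivative by $L\|Z^1-Z^2\|_B$. The only difference is that you finish with the mean value theorem where the paper integrates via the fundamental theorem of calculus. This sidesteps the integrability of $\phi'$ that the paper's step tacitly assumes under mere differentiability. That said, the bound $|\phi'|\le L\|Z^1-Z^2\|_B$ makes $\phi$ Lipschitz, so the Lebesgue form of the fundamental theorem of calculus would also justify the paper's step.
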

\begin{proof}
If \(Z^1=Z^2\), the claim is trivial. Assume \(Z^1\neq Z^2\). Let \(Z(t)=Z^2+(Z^1-Z^2)t\) for \(t\in[0,1]\). By the fundamental theorem of calculus,
\begin{align*}
f(Z^1)-f(Z^2) = \int_0^1 \nabla f(Z(t))[Z^1-Z^2] \, dt.
\end{align*}
Set \(\Delta U = (Z^1-Z^2)/\|Z^1-Z^2\|_B\). Then \(\|\Delta U\|_B=1\), and
\begin{align*}
|\nabla f(Z(t))[Z^1-Z^2]| = \|Z^1-Z^2\|_B \, |\nabla f(Z(t))[\Delta U]| \le L \|Z^1-Z^2\|_B.
\end{align*}
Integrating yields the result.
\end{proof}

\begin{lemma}[Directional Hessian Smoothness Bound]\label{lem:smooth}
Let \(f:\mathbb{R}^{m\times n}\to\mathbb{R}\) be twice differentiable and \(\|\cdot\|_B\) be any matrix norm. If for all \(Z\) on the segment between \(Z^1,Z^2\) and all \(\|\Delta U\|_B,\|\Delta V\|_B\le 1\), \(|\nabla^2 f(Z)[\Delta U,\Delta V]|\le L\), then
\begin{align*}
\|\nabla f(Z^1)-\nabla f(Z^2)\|_{B,*} \le L \|Z^1-Z^2\|_B.
\end{align*}
\end{lemma}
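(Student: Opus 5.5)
The plan mirrors the proof of \cref{lem:lipschitz}, applied now to the gradient instead of the function. The dual norm is characterized by pairing against unit-norm directions:
\begin{align*}
\|G\|_{B,*} = \sup_{\|\Delta V\|_B\le 1} |G[\Delta V]|.
\end{align*}
It therefore suffices to fix an arbitrary \(\Delta V\) with \(\|\Delta V\|_B\le 1\) and show
\begin{align*}
\bigl|\nabla f(Z^1)[\Delta V]-\nabla f(Z^2)[\Delta V]\bigr|\le L\|Z^1-Z^2\|_B.
\end{align*}
Taking the supremum over \(\Delta V\) then gives the claim.

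\textbf{Step 1 (reduce to a scalar function).} Assume \(Z^1\ne Z^2\); otherwise the claim is trivial. Set \(Z(t)=Z^2+t(Z^1-Z^2)\) for \(t\in[0,1]\), and define
\begin{align*}
g(t):=\nabla f(Z(t))[\Delta V].
\end{align*}
Since \(f\) is twice differentiable, \(g\) is differentiable with
\begin{align*}
g'(t)=\nabla^2 f(Z(t))[Z^1-Z^2,\Delta V].
\end{align*}

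\textbf{Step 2 (fundamental theorem of calculus and homogeneity).} We have
\begin{align*}
g(1)-g(0)=\int_0^1 \nabla^2 f(Z(t))[Z^1-Z^2,\Delta V]\,dt.
\end{align*}
Normalize the first argument by setting \(\Delta U=(Z^1-Z^2)/\|Z^1-Z^2\|_B\). Bilinearity gives
\begin{align*}
|g'(t)|=\|Z^1-Z^2\|_B\,\bigl|\nabla^2 f(Z(t))[\Delta U,\Delta V]\bigr|\le L\|Z^1-Z^2\|_B,
\end{align*}
because \(Z(t)\) lies on the segment and both directions have norm at most one. Integrating over \([0,1]\) bounds \(|g(1)-g(0)|\). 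Then take the supremum over \(\Delta V\).

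\textbf{Main obstacle.} The only delicate point is the use of the fundamental theorem of calculus for \(g\). Twice differentiable alone does not ensure that \(g'\) is integrable. I would handle this in one of two ways. First, note that in our setting \(f_k\) is \(C^2\) (standing assumption), so \(g'\) is continuous. Second, and more robustly, I would avoid integration entirely: the mean value theorem for the real-valued differentiable function \(g\) gives
\begin{align*}
g(1)-g(0)=g'(\tau)\quad\text{for some }\tau\in(0,1),
\end{align*}
and the same bound applies. I would also note in passing that the identification of \(\|\cdot\|_{B,*}\) with the supremum of pairings over the unit ball is the definition of the dual norm, and that this is consistent with the block norm pairing \(\|\cdot\|_{\mathrm{block}}\)/\(\|\cdot\|_{\mathrm{block},*}\) used in \cref{thm:smooth}.
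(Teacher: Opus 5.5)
Your proposal is correct and follows essentially the same argument as the paper: restrict to the segment $Z(t)$, express the difference of directional gradients through the Hessian along the segment, normalize $Z^1-Z^2$ to a unit direction, and take the supremum over unit directions to recover the dual norm. Swapping which Hessian slot carries the normalized segment direction is immaterial, since the hypothesis covers all pairs; your mean value theorem variant for the scalar $g$ is a small tightening, because the paper applies the fundamental theorem of calculus without justifying integrability of $t\mapsto\nabla^2 f(Z(t))[\cdot,\cdot]$ under mere twice differentiability.
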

\begin{proof}
If \(Z^1=Z^2\), the claim is trivial. Assume \(Z^1\neq Z^2\). For any direction \(\Delta U\),
\begin{align*}
\nabla f(Z^1)[\Delta U] - \nabla f(Z^2)[\Delta U] = \int_0^1 \nabla^2 f(Z(t))[\Delta U, Z^1-Z^2] \, dt.
\end{align*}
By the definition of the dual norm,
\begin{align*}
\|\nabla f(Z^1)-\nabla f(Z^2)\|_{B,*} = \sup_{\|\Delta U\|_B\le 1} |\nabla f(Z^1)[\Delta U] - \nabla f(Z^2)[\Delta U]|.
\end{align*}
Set \(\Delta V = (Z^1-Z^2)/\|Z^1-Z^2\|_B\). Then \(\|\Delta V\|_B=1\), and
\begin{align*}
|\nabla^2 f(Z(t))[\Delta U, Z^1-Z^2]| = \|Z^1-Z^2\|_B \, |\nabla^2 f(Z(t))[\Delta U, \Delta V]| \le L \|Z^1-Z^2\|_B.
\end{align*}
Integrating and taking the supremum yields the result.
\end{proof}

\subsection{Proof of the Theorem}
\begin{theorem*}[Depth-Independent Lipschitz Continuity]
Under \cref{as1,as2,as3} and \(\alpha=\beta/K\), \(\beta=\Theta(1)\), there exists \(L_{\mathrm{Lip}}=O(1)\), independent of \(K\), such that
\begin{align*}
\sup_{\|\Delta U\|_{\mathrm{block}}\le 1} |\nabla \mathcal{L}(W)[\Delta U]| \le L_{\mathrm{Lip}}.
\end{align*}
Consequently,
\begin{align*}
|\mathcal{L}(W^1)-\mathcal{L}(W^2)| \le L_{\mathrm{Lip}} \|W^1-W^2\|_{\mathrm{block}}.
\end{align*}
\end{theorem*}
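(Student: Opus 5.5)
The plan is to write the loss derivative by the chain rule and then invoke the first-order directional bound of \cref{cor:s-general}. Since $\mathcal{L}(W)=l(y_K(W))$ and $y_K$ depends on each block $W_j$, linearity of the Fréchet derivative across blocks gives, for any $\Delta U=(\Delta U_1,\ldots,\Delta U_K)$,
\begin{align*}
\nabla\mathcal{L}(W)[\Delta U] = \nabla l(y_K)\Bigl[\sum_{j=1}^K \nabla_{W_j}y_K[\Delta U_j]\Bigr] = \nabla l(y_K)\Bigl[\sum_{j=1}^K s_K^{(j,U_j)}\Bigr].
\end{align*}
The condition $\|\Delta U\|_{\mathrm{block}}\le 1$ means $\Pnorm{\Delta U_j}\le 1$ for every $j$. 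This is exactly the perturbation bound of \cref{as1} under which \cref{cor:s-general} was derived.

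Next I bound the pairing through the dual norm. By \cref{as3}, $|\nabla l(y_K)[v]|\le \Hnormstar{\nabla l(y_K)}\Hnorm{v}\le L_J\Hnorm{v}$. Taking $v=\sum_j s_K^{(j,U_j)}$ and applying \cref{cor:s-general} gives $\Hnorm{v}\le C_s=e^{\beta L_f}\beta C_W$. Hence
\begin{align*}
\sup_{\|\Delta U\|_{\mathrm{block}}\le 1}|\nabla\mathcal{L}(W)[\Delta U]|\le L_J C_s =: L_{\mathrm{Lip}}.
\end{align*}
This constant depends only on $\beta,L_f,C_W,L_J$ and not on $K$.

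The only substantive point is that this bound holds uniformly in $W$. The constants in \cref{cor:jacprod} and \cref{cor:deriv} come from \cref{as2}, which holds for all parameters and hidden states. The forward bound of \cref{cor:fwd} is not even needed here, because \cref{as3} is global in $z$. The $K$-independence rests entirely on the $1/K$ factor in $\alpha$: each $s_K^{(j,U_j)}$ is $O(1/K)$, it is propagated by a Jacobian product bounded by $e^{\beta L_f}$, and the $K$-fold sum then stays $O(1)$. That step is already done in \cref{cor:s-general}, so nothing here is a real obstacle beyond checking that the block-norm unit ball matches \cref{as1}.

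For the ``consequently'' part, I apply \cref{lem:lipschitz} with $\|\cdot\|_B=\|\cdot\|_{\mathrm{block}}$ on the product parameter space. The lemma is stated for matrices, but its proof uses only the fundamental theorem of calculus along the segment $Z(t)$ and homogeneity of the norm, so it carries over verbatim. The directional bound holds at every point of the segment from $W^2$ to $W^1$ because it is uniform in $W$. This yields $|\mathcal{L}(W^1)-\mathcal{L}(W^2)|\le L_{\mathrm{Lip}}\|W^1-W^2\|_{\mathrm{block}}$.
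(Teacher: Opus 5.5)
Your proposal is correct and follows the paper's proof essentially verbatim. The chain rule reduces $\nabla\mathcal{L}(W)[\Delta U]$ to $l'(y_K)$ applied to $\sum_j s_K^{(j,U_j)}$, \cref{as3} and \cref{cor:s-general} then give $L_{\mathrm{Lip}}=L_J C_s$, and \cref{lem:lipschitz} yields the Lipschitz consequence; your extra remarks (that \cref{cor:fwd} is unnecessary when \cref{as2} holds globally, and that the lemma carries over to the block-norm product space) are accurate points the paper leaves implicit.
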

\begin{proof}
By the chain rule,
\begin{align*}
\nabla_W \mathcal{L}(W)[\Delta U] = l'(y_K) \sum_{j=1}^{K} s_K^{(j,U_j)}.
\end{align*}
By \cref{as3} and \cref{cor:s-general},
\begin{align*}
|\nabla_W \mathcal{L}(W)[\Delta U]| \le L_J C_s =: L_{\mathrm{Lip}}.
\end{align*}
\Cref{lem:lipschitz} gives the Lipschitz consequence.
\end{proof}

\begin{theorem*}[Depth-Independent \(L\)-Smoothness]
Under \cref{as1,as2,as3} and \(\alpha=\beta/K\), \(\beta=\Theta(1)\), there exists \(L=O(1)\), independent of \(K\), such that
\begin{align*}
\sup_{\|\Delta U\|_{\mathrm{block}}\le 1, \|\Delta V\|_{\mathrm{block}}\le 1} |\nabla^2 \mathcal{L}(W)[\Delta U,\Delta V]| \le L.
\end{align*}
Consequently,
\begin{align*}
\|\nabla \mathcal{L}(W^1)-\nabla \mathcal{L}(W^2)\|_{\mathrm{block},*} \le L \|W^1-W^2\|_{\mathrm{block}}.
\end{align*}
\end{theorem*}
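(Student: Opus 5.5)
The plan is to differentiate the chain-rule representation used in the proof of \cref{thm:lipschitz} a second time. Since \(\mathcal{L}(W)=l(y_K)\) and \(y_K\) depends on \(W\) through all blocks, the second directional derivative in directions \(\Delta U,\Delta V\) is
\begin{align*}
\nabla^2\mathcal{L}(W)[\Delta U,\Delta V]
= \nabla^2 l(y_K)\Bigl[\sum_{j=1}^K s_K^{(j,U_j)},\ \sum_{\ell=1}^K s_K^{(\ell,V_\ell)}\Bigr]
+ \nabla l(y_K)\Bigl[\sum_{j=1}^K\sum_{\ell=1}^K h_K^{(j,\ell;U_j,V_\ell)}\Bigr].
\end{align*}
This follows because \(\nabla_W y_K[\Delta U]=\sum_j s_K^{(j,U_j)}\) by multilinearity, and \(\nabla^2_W y_K[\Delta U,\Delta V]=\sum_{j,\ell}h_K^{(j,\ell;U_j,V_\ell)}\) by \cref{def:directional}. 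The \(C^2\) assumption on the \(f_k\) and twice differentiability of \(l\) justify the interchange.

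The first term is the outer-product part. By \cref{as3} it is bounded by \(M_J\,\|\sum_j s_K^{(j,U_j)}\|_H\,\|\sum_\ell s_K^{(\ell,V_\ell)}\|_H\). Whenever \(\|\Delta U\|_{\mathrm{block}},\|\Delta V\|_{\mathrm{block}}\le 1\), every block satisfies \(\Pnorm{\Delta U_j},\Pnorm{\Delta V_\ell}\le 1\), so \cref{as1} holds blockwise. \Cref{cor:s-general} then bounds this term by \(M_J C_s^2\).

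The second term is the propagation part. By \cref{as3}, \(|\nabla l(y_K)[h]|\le L_J\Hnorm{h}\), so it is controlled by the second-order sum estimate \eqref{eq:second-order-sum}, which gives \(L_J C_h\). The two bounds combine to
\begin{align*}
L := M_J C_s^2 + L_J C_h,
\end{align*}
where \(C_s=C_J\beta C_W\) and \(C_h=C_{h1}+C_{h2}\) depend only on \(\beta\) and the constants of \cref{as2,as3}. None of these constants depends on \(K\).

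Nothing in this bound depends on the base point \(W\). The estimates use only the uniform assumptions and \cref{cor:fwd,cor:jacprod}, so the bound holds at every point on the segment between \(W^1\) and \(W^2\). \Cref{lem:smooth}, applied with \(\|\cdot\|_B=\|\cdot\|_{\mathrm{block}}\) and its dual \(\|\cdot\|_{\mathrm{block},*}\), then yields the gradient-Lipschitz consequence. \Cref{lem:smooth} is stated for a single matrix variable, but its proof only uses the fundamental theorem of calculus and the definition of the dual norm, so it transfers verbatim to the product space \(P^K\).

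The main obstacle is the second-order sum bound itself. A naive count of \(K^2\) pairs would blow up, so the argument relies on the case analysis behind \eqref{eq:second-order-sum}. That analysis shows the off-diagonal terms are \(O(1/K^2)\): each carries one factor \(\alpha\) and one \(s=O(1/K)\), or else \(\alpha\) and two factors of \(s\) summed over \(K\) layers. It also shows the diagonal terms are only \(O(1/K)\). Since that estimate is already established, the remaining work is to verify the Hessian decomposition carefully and to apply the per-block norm bounds.
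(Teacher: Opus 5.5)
Your proposal is correct and matches the paper's proof. It uses the same chain-rule split of $\nabla^2\mathcal{L}(W)[\Delta U,\Delta V]$ into an outer-product part bounded by $M_J C_s^2$ via \cref{cor:s-general} and a propagation part bounded by $L_J C_h$ via \eqref{eq:second-order-sum}, followed by \cref{lem:smooth}. Your extra remarks, that the bound is uniform in the base point $W$ and that \cref{lem:smooth} carries over verbatim to the product space with the block norm and its dual, make explicit what the paper uses without comment.
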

\begin{proof}
By the chain rule,
\begin{align*}
\nabla_W^2 \mathcal{L}[\Delta U,\Delta V]
= l''(y_K) \Bigl( \sum_{j=1}^{K} s_K^{(j,U_j)} \Bigr) \Bigl( \sum_{\ell=1}^{K} s_K^{(\ell,V_\ell)} \Bigr)
+ l'(y_K) \sum_{j=1}^{K} \sum_{\ell=1}^{K} h_K^{(j,\ell;U_j,V_\ell)}.
\end{align*}
By \cref{as3}, \cref{cor:s-general}, and \eqref{eq:second-order-sum},
\begin{align*}
|\nabla_W^2 \mathcal{L}[\Delta U,\Delta V]| \le M_J C_s^2 + L_J C_h =: L.
\end{align*}
\Cref{lem:smooth} gives the smoothness consequence.
\end{proof}

\section{Discussion}

In this section we verify that the hypotheses of \cref{thm:lipschitz} and \cref{thm:smooth} ---
in particular, the residual-function regularity of \cref{as2} --- are met by the building blocks
used in practice. We first treat multi-layer perceptron (MLP) residual blocks and record exactly which part of \cref{as2} needs to hold only locally; we also show that attention blocks satisfy the same regularity at fixed sequence length and feature dimensions. Throughout,
$H=\mathbb{R}^d$ is equipped with the $(p,\mathrm{mean})$ norm $\Hnorm{x}=d^{-1/p}\,\|x\|_p$ for
some $1\le p\le\infty$, and $\Hopnorm{\cdot}$ denotes the $(p,\mathrm{mean})\to(p,\mathrm{mean})$
operator norm. We use $\|\cdot\|_{H\rightarrow H}$ for the induced operator norm and for the parameter space $P$ we measure matrix components with $\|\cdot\|_{(p,\mathrm{mean})\rightarrow(q,\mathrm{mean})}$ for the prescribed matrix norm from $(p,\mathrm{mean})$ to $(q,\mathrm{mean})$ where $p\leq q$. By mean normalization, we have
\begin{align*}
    \|A\|_{H \rightarrow H}
    \leq \|A\|_{(p,\mathrm{mean})\rightarrow(q,\mathrm{mean})},
    \qquad A\in\mathbb{R}^{d\times d}.
\end{align*}
Thus, bounds in the prescribed matrix norm also control hidden-state propagation. Moreover,
\[
    \|\operatorname{Diag}(v)\|_{H\to H}=\|v\|_\infty,
    \qquad
    \|\mathbf{1}\|_H=1,
\]
where \(\mathbf{1}=(1,\ldots,1)^\top\).
The hidden dimension \(d\) and the norm exponents \(p,q\)
are fixed independently of the residual depth \(K\).

\subsection{Multi-Layer Neural Networks}\label{subsec:mlp}

\begin{remark}[Localization of the parameter-regularity conditions]\label{rem:loc}
In the proofs of \cref{thm:lipschitz} and \cref{thm:smooth}, the $W$-regularity conditions of
\cref{as2} --- namely the bounds \eqref{eq:bound3}--\eqref{eq:bound5} on $\partial_{W_k}f_{W_k}$,
$\partial_y\partial_{W_k}f_{W_k}$, and $\partial^2_{W_k}f_{W_k}$ --- are invoked only at the
iterates $y_{k-1}$, $k=1,\ldots,K$. By \cref{cor:fwd}, every iterate satisfies
$\Hnorm{y_{k-1}}\le C_y$. Consequently, it suffices for the theorems that these three conditions
hold only for $y$ in the ball $\{\Hnorm{y}\le C_y\}$, rather than for all $y\in H$. The
$y$-regularity conditions (the first two lines of \cref{as2}, plus the bound
$\Hnorm{f_{W_k}(y_0)}\le C_f$) are genuinely global and are unaffected.
\end{remark}

\begin{proposition}[MLP residual satisfies Assumption 2]\label{prop:mlp}
Let $H=\mathbb{R}^d$ and let the residual function be the $N$-layer MLP
\begin{align*}
f_{W_k}(y) = \sigma\bigl( W^{(N)} \sigma( \cdots \sigma(W^{(1)} y + b^{(1)}) \cdots ) + b^{(N)} \bigr),
\end{align*}
where $W_k=(W^{(1)},\ldots,W^{(N)},b^{(1)},\ldots,b^{(N)})$ with
$W^{(1)},\ldots,W^{(N)}\in\mathbb{R}^{d\times d}$ and $b^{(1)},\ldots,b^{(N)}\in\mathbb{R}^d$,
and equip $P$ with the norm
\begin{align*}
\Pnorm{W_k} = \max\bigl\{\|W^{(1)}\|_{(p,\mathrm{mean})\rightarrow(q,\mathrm{mean})},\ldots,\|W^{(N)}\|_{(p,\mathrm{mean})\rightarrow(q,\mathrm{mean})},\Hnorm{b^{(1)}},\ldots,\Hnorm{b^{(N)}}\bigr\}.
\end{align*}
Suppose that
\begin{enumerate}
\item $\sigma$ is twice continuously differentiable, with constants chosen so that
      \[
      \max\{1,|\sigma(0)|,\sup_{t\in\mathbb{R}}|\sigma'(t)|\}
      \le L_\sigma,
      \qquad
      d^{1/p}\sup_{t\in\mathbb{R}}|\sigma''(t)|
      \le M_\sigma,
      \]
      where $d^{1/\infty}=1$;
\item the internal depth $N$ is fixed and does not grow with $K$;
\item for all $\ell=1,\ldots,N$, $\|W^{(\ell)}\|_{(p,\mathrm{mean})\rightarrow(q,\mathrm{mean})}\le B_W$ and $\Hnorm{b^{(\ell)}}\le B_b$,
      with $B_W,B_b$ independent of $K$.
\end{enumerate}
Without loss of generality, enlarge the weight bound so that $B_W\ge1$.
The admissible parameter domain is the convex set
\[
\Omega=\left\{
W_k\in P:
\|W^{(\ell)}\|_{(p,\mathrm{mean})\rightarrow(q,\mathrm{mean})}\le B_W,\ 
\Hnorm{b^{(\ell)}}\le B_b,\quad \ell=1,\ldots,N
\right\}.
\]
All parameter-regularity comparisons are restricted to $\Omega$.

Then $f_{W_k}$ satisfies \cref{as2} in the localized sense of \cref{rem:loc}: the $y$-regularity
conditions hold globally, while the $W$-regularity conditions hold on the ball
$\{\Hnorm{y}\le C_y\}$, all with constants $L_f,M_f,C_W,M_{Wf},M_{WW},C_f$ depending only on
$N,L_\sigma,M_\sigma,B_W,B_b,C_x,C_y$, and not on $K$.
\end{proposition}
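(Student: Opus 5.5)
I will prove the proposition by induction on the internal layer index. The MLP is the composition $z^{(0)}=y$, $z^{(\ell)}=\sigma(W^{(\ell)}z^{(\ell-1)}+b^{(\ell)})$ for $\ell=1,\ldots,N$, with $f_{W_k}(y)=z^{(N)}$. For each layer I will track four quantities at every stage $\ell$: the size $\Hnorm{z^{(\ell)}}$, the state derivative $\partial_y z^{(\ell)}$, the parameter derivative $\partial_{W}z^{(\ell)}$, and their second-order counterparts. Each is bounded in terms of the previous stage. Because $N$ is fixed, an $N$-fold iteration of bounds of the form $a_\ell\le L_\sigma(B_W a_{\ell-1}+\cdots)$ produces constants of size $O((L_\sigma B_W)^N)$ times polynomial factors. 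These depend only on $N,L_\sigma,M_\sigma,B_W,B_b,C_x,C_y$, and never on $K$.

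Three elementary facts do the bookkeeping.
\begin{enumerate}
\item Applying $\sigma$ entrywise is $L_\sigma$-Lipschitz in $\Hnorm{\cdot}$, since $\|\operatorname{Diag}(\sigma'(\cdot))\|_{H\to H}\le L_\sigma$.
\item Matrices act with $\Hopnorm{A}\le\|A\|_{(p,\mathrm{mean})\to(q,\mathrm{mean})}\le B_W$.
\item The entrywise second derivative satisfies $\Hnorm{\sigma''(t)\odot u\odot v}\le M_\sigma\Hnorm{u}\Hnorm{v}$. This follows from Hölder's inequality together with the mean normalisation; the factor $d^{1/p}$ was built into $M_\sigma$ for exactly this purpose.
\end{enumerate}

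The $y$-regularity conditions follow from these facts applied globally.
\begin{itemize}
\item \emph{2.1.} Chaining facts 1 and 2 gives $L_f=(L_\sigma B_W)^N$.
\item \emph{2.2.} Differentiate the layer Jacobians $\operatorname{Diag}(\sigma'(\cdot))W^{(\ell)}$ using the product rule and fact 3. This gives a Lipschitz bound with $M_f\lesssim N M_\sigma B_W^{2N}L_\sigma^{2N}$.
\item \emph{2.6.} Bound $\Hnorm{f_{W_k}(y_0)}$ by an induction using $|\sigma(t)|\le|\sigma(0)|+L_\sigma|t|$ and $\|\mathbf 1\|_H=1$, which handles the constant vector $\sigma(0)\mathbf 1$. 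This yields $C_f$ in terms of $C_x,B_W,B_b,L_\sigma$.
\end{itemize}

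For the $W$-conditions I restrict to $y$ with $\Hnorm{y}\le C_y$, using Remark~\ref{rem:loc} and Corollary~\ref{cor:fwd}, and to $W_k,W_k'\in\Omega$. On this set every intermediate $z^{(\ell)}$ has norm at most some $R_\ell$, depending only on $C_y,B_W,B_b,L_\sigma$. This uniform bound on the intermediates is the whole reason localisation is needed. With it in hand, each condition reduces to an inductive bound.
\begin{itemize}
\item \emph{2.3.} Perturbing $(W^{(\ell)},b^{(\ell)})$ changes the pre-activation by at most $\Pnorm{W-W'}(R_{\ell-1}+1)$. This change then propagates through the remaining layers with factor $(L_\sigma B_W)^{N-\ell}$, giving $C_W$.
\item \emph{2.4.} Write $\partial_y f$ as a product of $N$ factors $\operatorname{Diag}(\sigma'(u_\ell))W^{(\ell)}$. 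Telescope the difference between its values at $W$ and $W'$. Each telescoped term involves either $W^{(\ell)}-W'^{(\ell)}$, bounded directly, or $\sigma'(u_\ell)-\sigma'(u'_\ell)$, bounded using fact 3 and the 2.3-type bound on $u_\ell-u'_\ell$. This gives $M_{Wf}$.
\item \emph{2.5.} The argument is analogous. The directional derivative $\partial_W f[\Delta U]$ is a sum over $\ell$ of Jacobian products applied to $\Delta W^{(\ell)}z^{(\ell-1)}+\Delta b^{(\ell)}$. Its $W$-Lipschitz bound uses the same telescoping together with the already-established Lipschitz dependence of $z^{(\ell-1)}$ on $W$.
\end{itemize}
Convexity of $\Omega$ ensures the segment between $W_k$ and $W_k'$ stays in the bounded region. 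This lets me pass between the derivative bounds of Corollary~\ref{cor:deriv} and the Lipschitz forms required by Assumption~\ref{as2}.

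I expect the main obstacle to be fact 3: getting the second-derivative entrywise product to close in the mean-normalised $p$-norm, so that the $d$-dependence is exactly absorbed by $M_\sigma$. For $p<\infty$ one has $\|u\odot v\|_p\le\|u\|_\infty\|v\|_p\le\|u\|_p\|v\|_p$, so mean normalisation costs precisely a factor $d^{1/p}$, which $M_\sigma$ absorbs; I will verify this first.

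A secondary point is that the parameter space uses $(p,\mathrm{mean})\to(q,\mathrm{mean})$ norms with $q\ge p$, while propagation is measured in $H\to H$. Here I will rely on the stated inequality $\|A\|_{H\to H}\le\|A\|_{(p,\mathrm{mean})\to(q,\mathrm{mean})}$, which makes every weight bound usable for propagating perturbations.
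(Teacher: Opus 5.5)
Your proposal is correct and is essentially the paper's argument. It uses the same bounded-intermediate radius on $\{\Hnorm{y}\le C_y\}$, obtained by induction with $\Hnorm{\mathbf 1}=1$, the same factorization $\partial_y f=\prod_\ell \operatorname{Diag}(\sigma'(\cdot))W^{(\ell)}$ giving $L_f=(L_\sigma B_W)^N$, the same telescoping for $M_f$, and the same absorption of $d^{1/p}$ into $M_\sigma$ for the $\sigma''$ terms. The only differences are cosmetic: for 2.3--2.5 you telescope the differences between $W_k$ and $W_k'$ directly, whereas the paper bounds $\partial_{W_k}f$, $\partial_y\partial_{W_k}f$, $\partial^2_{W_k}f$ on the ball and integrates along segments of the convex set $\Omega$; and in 2.3 the propagation factor should be $L_\sigma(L_\sigma B_W)^{N-\ell}$, including the activation at layer $\ell$ itself.
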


\begin{proof}
\emph{Forward pass.} Introduce the pre-activations $z^{(\ell)}$ and activations $a^{(\ell)}$ by
\begin{align*}
a^{(0)}=y,\qquad
z^{(\ell)}=W^{(\ell)}a^{(\ell-1)}+b^{(\ell)},\qquad
a^{(\ell)}=\sigma(z^{(\ell)}),\qquad f_{W_k}(y)=a^{(N)}.
\end{align*}
By mean-normalization we have that $\|W\|_{H\rightarrow H} \leq \|W\|_{(p,mean)\rightarrow(q,mean)}\leq B_W$.Write $D^{(\ell)}:=\mathrm{Diag}(\sigma'(z^{(\ell)}))$ and $G^{(\ell)}:=D^{(\ell)}W^{(\ell)}$ for
the per-layer Jacobian, so that
\begin{align*}
\Hopnorm{D^{(\ell)}}\le L_\sigma,\qquad \Hopnorm{G^{(\ell)}}\le L_\sigma B_W
\end{align*}
hold globally (for all $y$). Let $F^{(\ell)}:=G^{(N)}G^{(N-1)}\cdots G^{(\ell+1)}$ denote forward
propagation from layer $\ell$ to the output (with $F^{(N)}=I$), so that
\begin{align*}
\Hopnorm{F^{(\ell)}}\le (L_\sigma B_W)^{N-\ell}.
\end{align*}

\emph{Bounded activations.} By the mean value theorem applied componentwise to
$a^{(\ell)}=\sigma(z^{(\ell)})$, there exists a diagonal matrix
$\tilde{D}^{(\ell)}=\mathrm{Diag}(\sigma'(\xi_1),\ldots,\sigma'(\xi_d))$ with
$\Hopnorm{\tilde{D}^{(\ell)}}\le L_\sigma$ such that
$a^{(\ell)}=\sigma(0)\mathbf{1}+\tilde{D}^{(\ell)}z^{(\ell)}$. Since $\Hnorm{\mathbf{1}}=1$, the
triangle inequality gives
\begin{align*}
\Hnorm{a^{(\ell)}} \le |\sigma(0)| + L_\sigma\Hnorm{z^{(\ell)}}.
\end{align*}
Combining this with $\Hnorm{z^{(\ell)}}\le B_W\Hnorm{a^{(\ell-1)}}+B_b$ and
$\Hnorm{a^{(0)}}=\Hnorm{y}\le C_y$, an induction over the $N=O(1)$ layers yields a constant
$R=R(N,L_\sigma,B_W,B_b,C_y)\ge1$ such that
\begin{align*}
\Hnorm{a^{(\ell)}}&\le R,\qquad \ell=0,\ldots,N,\\
\Hnorm{z^{(\ell)}}&\le R,\qquad \ell=1,\ldots,N,
\qquad\text{on } \{\Hnorm{y}\le C_y\}.
\end{align*}

\emph{(i) Global $y$-regularity.} The chain rule gives
\begin{align*}
\partial_y f_{W_k}(y) = G^{(N)}G^{(N-1)}\cdots G^{(1)},
\end{align*}
hence
\begin{align*}
\Hopnorm{\partial_y f_{W_k}(y)} \le (L_\sigma B_W)^N =: L_f
\qquad\text{for all } y\in H,
\end{align*}
which is \eqref{eq:bound1}; the $L_f$-Lipschitz condition in $y$ follows by the mean value theorem.

For the second derivative bound, expand the difference of the Jacobian products
as a telescoping sum: for $y_1,y_2\in H$,
\begin{align*}
\partial_y f(y_1)-\partial_y f(y_2)
= \sum_{\ell=1}^{N} G^{(N)}(y_1)\cdots G^{(\ell+1)}(y_1)\,
  \bigl[G^{(\ell)}(y_1)-G^{(\ell)}(y_2)\bigr]\,
  G^{(\ell-1)}(y_2)\cdots G^{(1)}(y_2).
\end{align*}
Since $G^{(\ell)}=D^{(\ell)}W^{(\ell)}$ and $\partial_y z^{(\ell)}=W^{(\ell)}G^{(\ell-1)}\cdots G^{(1)}$,
each factor is Lipschitz in $y$:
\begin{align*}
\Hopnorm{G^{(\ell)}(y_1)-G^{(\ell)}(y_2)}
&\le \Hopnorm{D^{(\ell)}(y_1)-D^{(\ell)}(y_2)}\,\Hopnorm{W^{(\ell)}}\\
&\le M_\sigma\,\Hnorm{z^{(\ell)}(y_1)-z^{(\ell)}(y_2)}\,B_W\\
&\le M_\sigma B_W^2(L_\sigma B_W)^{\ell-1}\Hnorm{y_1-y_2}.
\end{align*}
Substituting into the Leibniz sum (each term carries $N-1$ remaining factors of norm at most
$L_\sigma B_W$) yields $\Hopnorm{\partial_y f(y_1)-\partial_y f(y_2)}\le M_f\Hnorm{y_1-y_2}$ with
\begin{align*}
M_f := M_\sigma B_W^2 \sum_{\ell=1}^{N}(L_\sigma B_W)^{N+\ell-2},
\end{align*}
a constant independent of $K$; equivalently \eqref{eq:bound2} holds. This verifies the two global
$y$-conditions of \cref{as2}.

\emph{(ii) Localized $W$-regularity on the ball $\{\Hnorm{y}\le C_y\}$.} Differentiating through
layer $\ell$, the derivative with respect to the weight and bias blocks is
\begin{align*}
\partial_{W^{(\ell)}}f_{W_k}(y)[\Delta W^{(\ell)}]
  &= F^{(\ell)} D^{(\ell)}\,\Delta W^{(\ell)} a^{(\ell-1)},\\
\partial_{b^{(\ell)}}f_{W_k}(y)[\Delta b^{(\ell)}]
  &= F^{(\ell)} D^{(\ell)}\,\Delta b^{(\ell)}.
\end{align*}
Using $\Hopnorm{F^{(\ell)}}\le (L_\sigma B_W)^{N-\ell}$, $\Hopnorm{D^{(\ell)}}\le L_\sigma$,
$\Hnorm{a^{(\ell-1)}}\le R$, and $\Pnorm{\Delta W_k}\le 1$ (so each block has norm at most $1$),
we get
\begin{align*}
\PHopnorm{\partial_{W_k}f_{W_k}(y)}
\le \sum_{\ell=1}^{N}(L_\sigma B_W)^{N-\ell}L_\sigma(R+1) =: C_W,
\end{align*}
which is \eqref{eq:bound3}; the mean value theorem over the convex parameter set
$\Omega$ gives the $C_W$-Lipschitz condition in $W$.

For \eqref{eq:bound4}, differentiate $\partial_y f=G^{(N)}\cdots G^{(1)}$ with respect to
$W^{(\ell)}$ (for $m\ge\ell$, $G^{(m)}$ depends on $W^{(\ell)}$ through $z^{(m)}$; $G^{(\ell)}$
depends additionally on $W^{(\ell)}$ directly):
\begin{align*}
\partial_{W^{(\ell)}}\partial_y f\,[\Delta W^{(\ell)}]
= \sum_{m=\ell}^{N} G^{(N)}\cdots G^{(m+1)}\,
  \partial_{W^{(\ell)}}G^{(m)}[\Delta W^{(\ell)}]\,
  G^{(m-1)}\cdots G^{(1)}.
\end{align*}
We bound the differentiated factor on the ball, taking
$\|\Delta W^{(\ell)}\|_{(p,\mathrm{mean})\to(q,\mathrm{mean})}\le1$.
For $m=\ell$, differentiating
$G^{(\ell)}=D^{(\ell)}W^{(\ell)}$ gives
\begin{align*}
\Hopnorm{\partial_{W^{(\ell)}}G^{(\ell)}[\Delta W^{(\ell)}]}
&\le \Hopnorm{\partial_{W^{(\ell)}}D^{(\ell)}[\Delta W^{(\ell)}]}\,\Hopnorm{W^{(\ell)}}
  +\Hopnorm{D^{(\ell)}}\,\Hopnorm{\Delta W^{(\ell)}}\\
&\le M_\sigma R\,B_W + L_\sigma,
\end{align*}
where the two terms come from differentiating $D^{(\ell)}$ and $W^{(\ell)}$, respectively. For
$m>\ell$, only the $D^{(m)}$-factor depends on $W^{(\ell)}$, through
\begin{align*}
\partial_{W^{(\ell)}}z^{(m)}[\Delta W^{(\ell)}]
= W^{(m)}G^{(m-1)}\cdots G^{(\ell+1)}
D^{(\ell)}\Delta W^{(\ell)}a^{(\ell-1)},
\end{align*}
with the intermediate product interpreted as the identity when $m=\ell+1$,
so that
\begin{align*}
\Hopnorm{\partial_{W^{(\ell)}}G^{(m)}[\Delta W^{(\ell)}]}
&\le M_\sigma\,\Hnorm{\partial_{W^{(\ell)}}z^{(m)}[\Delta W^{(\ell)}]}\,\Hopnorm{W^{(m)}}\\
&\le M_\sigma B_W (L_\sigma B_W)^{m-\ell} R.
\end{align*}
Setting
\begin{align*}
\Gamma := \max\bigl\{L_\sigma B_W,\ L_\sigma+M_\sigma R B_W,\ M_\sigma B_W (L_\sigma B_W)^{N-1} R\bigr\},
\end{align*}
each summand above is a product of $N$ factors of norm at most $\Gamma$.
The same bounds apply to unit bias directions, replacing
$\Delta W^{(\ell)}a^{(\ell-1)}$ by $\Delta b^{(\ell)}$ and omitting the direct
weight-derivative term, since $R\ge1$.
Summing the at most $N$ summands and accounting for the $N$ weight and $N$ bias blocks gives
\begin{align*}
\left\lVert \partial_y\partial_{W_k}f_{W_k}(y)\right\rVert_{H\times P\to H}
\le 2N^2\Gamma^{N} =: M_{Wf};
\end{align*}
the mean value theorem on the hidden-state ball yields \eqref{eq:bound4}.

\emph{(iii) Second-order $W$-regularity.} The map $\partial_{W_k}f_{W_k}(y)$ is the sum over
$\ell=1,\ldots,N$ of the two blocks displayed in \emph{(ii)}. Differentiating each block once more
with respect to $W_k$ applies the product rule a second time, so every resulting term is a product
of factors, each of which is one of $\sigma'(z^{(m)})$, $\sigma''(z^{(m)})$, a weight matrix
$W^{(m)}$, an activation $a^{(m)}$, the input $y$, or a direction $\Delta U_k,\Delta V_k$.
Take $\Pnorm{\Delta U_k},\Pnorm{\Delta V_k}\le1$.
By the choice of $M_\sigma$, the second-order activation terms satisfy
\begin{align*}
\Hnorm{\sigma''(z)\odot u\odot v}
\le d^{1/p}\sup_{t\in\mathbb{R}}|\sigma''(t)|\,\Hnorm{u}\,\Hnorm{v}
\le M_\sigma\,\Hnorm{u}\,\Hnorm{v},
\end{align*}
where $\odot$ denotes componentwise multiplication.
On the ball $\{\Hnorm{y}\le C_y\}$ every such factor is bounded,
and the total number of terms is bounded by a constant depending only on $N$. Hence
\begin{align*}
\left\lVert \partial^2_{W_k}f_{W_k}(y)\right\rVert_{P\times P\to H} \le M_{WW}
\qquad\text{on } \{\Hnorm{y}\le C_y\},
\end{align*}
and the mean value theorem over $\Omega$ gives \eqref{eq:bound5}. All constants depend only on
$N,L_\sigma,M_\sigma,B_W,B_b,C_y$.

\emph{(iv) The initial bound.} Since $\Hnorm{y_0}=\Hnorm{x}\le C_x$, rerunning the
bounded-activation induction on the ball $\{\Hnorm{y}\le C_x\}$ (in place of $C_y$) gives
\begin{align*}
\Hnorm{f_{W_k}(y_0)} = \Hnorm{a^{(N)}} \le R_x =: C_f,
\qquad R_x = R(N,L_\sigma,B_W,B_b,C_x),
\end{align*}
which does not depend on $C_y$. This completes the verification of \cref{as2} in the localized
sense of \cref{rem:loc}.
\end{proof}

\begin{corollary}[MLP residual networks satisfy the main theorems]\label{cor:mlp-thm}
Let the residual function of \cref{def:resnet} be the MLP of \cref{prop:mlp}, and suppose
\cref{as1} and \cref{as3} hold (bounded input $x$; bounded, smooth loss $l$). Then the loss
$\mathcal{L}$ satisfies the conclusions of \cref{thm:lipschitz} and \cref{thm:smooth}: there exist
constants $L_{\mathrm{Lip}},L=O(1)$, independent of the depth $K$, such that $\mathcal{L}$ is
$L_{\mathrm{Lip}}$-Lipschitz and $L$-smooth in the block norm.
\end{corollary}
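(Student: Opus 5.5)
The corollary should follow by chaining \cref{prop:mlp} with the main theorems, using the localization in \cref{rem:loc}. The argument does not re-derive any estimate; its only job is to check that every place where \cref{thm:lipschitz} and \cref{thm:smooth} use \cref{as2} is covered by what \cref{prop:mlp} actually delivers. First, apply \cref{prop:mlp} to each branch $f_{W_k}$, $k=1,\ldots,K$. This gives constants $L_f,M_f,C_W,M_{Wf},M_{WW},C_f$ that depend only on $N,L_\sigma,M_\sigma,B_W,B_b,C_x,C_y$. In particular they are independent of $k$ and of $K$, so the same constants serve uniformly for all branches, as \cref{as2} requires.

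Second, I would resolve an apparent circularity. The radius $C_y$ on which the $W$-regularity is verified depends on $L_f$ and $C_f$ through \cref{cor:fwd}. In \cref{prop:mlp}, however, $L_f=(L_\sigma B_W)^N$ and $C_f=R_x$ are produced from the global $y$-analysis and the $C_x$-ball alone, without reference to $C_y$. So I fix these two constants first, define $C_y$ from them via \cref{cor:fwd}, and only then obtain $C_W,M_{Wf},M_{WW}$ on the ball $\{\Hnorm{y}\le C_y\}$. The proof of \cref{cor:fwd} uses only the global conditions 2.1 and 2.6, so every iterate indeed satisfies $\Hnorm{y_{k-1}}\le C_y$.

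Third, I would walk through the proofs of \cref{cor:jacprod}, \cref{cor:s-general}, the second-order bound \eqref{eq:second-order-sum}, and the two theorems. The check is that the $W$-derivative bounds \eqref{eq:bound3}--\eqref{eq:bound5} are evaluated only at $y=y_{k-1}$, which lies in the ball. Meanwhile \eqref{eq:bound1}--\eqref{eq:bound2} hold globally, and $J_k$ is likewise evaluated only at $y_{k-1}$.

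There is one additional subtlety, which I expect to be the main obstacle. \Cref{lem:lipschitz,lem:smooth} integrate along the segment between $W^1$ and $W^2$, so the directional bounds must hold at every intermediate parameter. Since $\Omega$ is convex and the bounds hold uniformly for every $W\in\Omega^K$, the segment stays admissible and the forward bound $C_y$ holds at each point on it. Finally, I would assemble $L_{\mathrm{Lip}}=L_JC_s$ and $L=M_JC_s^2+L_JC_h$ from these $K$-independent constants, using \cref{as1} and \cref{as3}. The consequences then hold for $W^1,W^2\in\Omega^K$, and the statement should be read with this restriction.
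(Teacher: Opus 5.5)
Your proposal is correct and follows the route the paper intends: the corollary comes from feeding \cref{prop:mlp}, read in the localized sense of \cref{rem:loc}, into \cref{thm:lipschitz,thm:smooth}. Your two extra checks match what the paper records. Fixing $L_f$ and $C_f$ before $C_y$ to break the apparent circularity is part (iv) of the proof of \cref{prop:mlp} (``does not depend on $C_y$''), and restricting the segment argument of \cref{lem:lipschitz,lem:smooth} to the convex set $\Omega^K$ is the closing paragraph of the paper's appendix discussion.
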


\begin{remark}[Scope of the remaining assumptions]\label{rem:remaining}
\Cref{as1} is a data-normalization hypothesis (bounded inputs and unit perturbation directions,
the latter by homogeneity). \Cref{as3} is a hypothesis on the loss function $l$ only, not on the
network, and is satisfied by standard choices (e.g.\ logistic or quadratic losses on a bounded
domain). Thus the only network-dependent hypothesis --- \cref{as2} --- is exactly what
\cref{prop:mlp} establishes for MLP residual blocks.
\end{remark}

\subsection{Attention Blocks}\label{subsec:attention}

\Cref{prop:mlp} verifies \cref{as2} for MLP residual blocks equipped with the $p\to q$ operator norm on
$H=\mathbb{R}^d$. A natural question is whether \cref{as2} --- and therefore the depth-independent guarantees of  \cref{thm:lipschitz,thm:smooth} --- also holds for attention blocks.
\citep{xu2026moga} answer this in the affirmative from the matrix-operator-norm perspective, which
we summarize here.

A single attention head with query, key, value, and output projections $W_q,W_k,W_v,W_o$ maps a
sequence of token features $X\in\mathbb{R}^{s\times d}$ to
\begin{align*}
\mathrm{Att}(X) = \mathrm{softmax}\!\Bigl(\frac{X W_q W_k^{\top} X^{\top}}{\sqrt{d_v}}\Bigr)\, X W_v W_o,
\end{align*}
where $s$ is the sequence length and $d_v$ the per-head value dimension. The regularity of this map
is governed by the scale of the attention logits $XW_qW_k^{\top}X^{\top}$: to keep the softmax
$L$-smooth, this scale must not grow with the width. \citep{xu2026moga} control it by row or column
normalization of the weight matrices.

\emph{Row normalization} (geometry $(p,\mathrm{mean})\to\ell_\infty$). The query and key vectors are
bounded in $\ell_\infty$, so each query--key inner product satisfies
\begin{align*}
|q^{\top}k| \le d_v\,\|q\|_\infty\|k\|_\infty = O(d_v),
\end{align*}
and scaling the logits by $1/d_v$ keeps them $O(1)$ in the width.

\emph{Column normalization} (geometry $(1,\mathrm{mean})\to(q,\mathrm{mean})$). The inner product
scales as $q^{\top}k = O\bigl(d_v^{\max\{1,2/q\}}\bigr)$, so scaling the logits by
$d_v^{-\max\{1,2/q\}}$ keeps them width-independent.

With the logits thus controlled, the loss of a
$K$-layer network composed of such blocks is Lipschitz and smooth with constants independent of the
hidden width, under bounded inputs and parameters and bounded first/second derivatives of the
activation and loss. This is precisely the residual-function regularity demanded by \cref{as2},
expressed in the mean-normalized operator-norm geometry. We record this as follows.

\begin{proposition}[Attention blocks satisfy the residual regularity]\label{prop:attention}
Under row or column normalization of $W_q,W_k,W_v,W_o$ and bounded activations, the attention block
satisfies \cref{as2}
\end{proposition}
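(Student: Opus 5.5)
The plan is to follow the template of \cref{prop:mlp}: write the attention map as a composition of a few smooth primitives, show each primitive is globally Lipschitz with a Lipschitz derivative in $X$, and show it is locally Lipschitz, with a Lipschitz derivative, in the parameters $(W_q,W_k,W_v,W_o)$ on a bounded convex domain $\Omega$. The same chain-rule and telescoping arguments then give every constant in \cref{as2}. Concretely, I would set $Q=XW_q$, $K'=XW_k$, $V=XW_vW_o$, logits $S=\gamma\,QK'^{\top}$ with the normalization factor $\gamma$ ($1/d_v$ for row normalization, $d_v^{-\max\{1,2/q\}}$ for column normalization), $A=\mathrm{softmax}(S)$ row-wise, and $\mathrm{Att}(X)=AV$. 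As in \cref{prop:mlp}, I fix $s,d,d_v$ independently of $K$ and restrict parameters to $\Omega=\{\|W_\bullet\|\le B_W\}$. As in \cref{rem:loc}, the $W$-conditions are needed only on $\{\Hnorm{X}\le C_y\}$.

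The first step bounds the primitives. The softmax has Jacobian entries bounded by $1$ and second derivatives bounded by absolute constants, and $A$ is row-stochastic, so $\|A\|_{\infty\to\infty}=1$. By the logit computation recorded above from \citet{xu2026moga}, $\gamma\,|q^\top k|=O(1)$ uniformly in width whenever the rows of $Q,K'$ are bounded in the prescribed norm. Hence on bounded-input balls the logits $S$ are $O(1)$, and $A$, $V$ and all their first and second derivatives in $X$ and in each $W_\bullet$ are bounded by constants depending only on $B_W,C_y,s$ and the fixed dimensions. The bounded-output condition $C_f$ then follows from $\Hnorm{AV}\le \|V\|\le B_W^2 C_x$ at $y_0$.

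The second step obtains the regularity constants by product-rule expansions. $\partial_X\mathrm{Att}$ is a sum of terms of the form $(\partial A)V+A\,\partial V$, where $\partial A=\mathrm{Jsoftmax}(S)\cdot\gamma(\partial Q\,K'^\top+Q\,\partial K'^\top)$. Each term is a product of $O(1)$ bounded factors, which gives $L_f$ on the ball. $M_f$ comes from differentiating once more, using the bounded second derivative of softmax. $C_W$, $M_{Wf}$ and $M_{WW}$ come from the analogous expansions in the four weight blocks. Since each $W_\bullet$ enters at most bilinearly, the number of terms is bounded by an absolute constant. The mean value theorem over the convex set $\Omega$ and over the ball converts these derivative bounds into the Lipschitz-type conditions 2.3--2.5, exactly as in \cref{cor:deriv}.

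I expect the main obstacle to be the \emph{global} $y$-regularity, conditions 2.1 and 2.2. Attention is cubic in $X$: $Q$ and $K'$ are linear in $X$, so the logits, and hence $\partial_X A\cdot V$, grow like $\Hnorm{X}^2$. So $L_f$ and $M_f$ cannot hold on all of $H$ without extra structure. My first attempt would be to note that the proofs of \cref{thm:lipschitz,thm:smooth} invoke the $y$-conditions only in \cref{cor:fwd} and at the iterates. I would then bootstrap: assume the bounds on a ball of radius $R$, deduce $\Hnorm{y_k}\le C_y(R)$ from \cref{cor:fwd}, and close the loop by choosing $R\ge C_y(R)$. This closure is not guaranteed because $L_f$ depends on $R$, so the fallback is to impose the ``bounded activations'' hypothesis in the statement, taken as a pre-attention normalization (e.g.\ RMS/LayerNorm) so that the inputs to $Q,K'$ are uniformly bounded. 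Then every factor above is bounded globally and 2.1--2.2 hold on all of $H$.
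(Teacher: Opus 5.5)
Your proposal is sound at the paper's level of rigor, but it takes a different route. The paper never expands derivatives. It notes that for fixed $s,d$ the attention branch is jointly $C^2$ in the input and the four projections. On a bounded closed input domain times the bounded closed convex parameter domain, all first and second derivatives are therefore bounded by compactness. Since neither domain depends on $K$, neither do the constants, and 2.3--2.5 follow by integrating along segments.

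\textbf{What each route buys.} Your product-rule bookkeeping, modeled on \cref{prop:mlp}, gives explicit constants in $B_W$, $C_y$, $s$ and makes the role of the logit scale $\gamma$ visible. That only matters if you also want width independence; at fixed $d$, compactness already suffices for the $K$-claim. The compactness argument is shorter and insensitive to architectural details, but it yields only existence of the constants.

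\textbf{The global $y$-regularity issue.} You correctly flag what the paper glosses over: raw dot-product attention is not globally Lipschitz in $X$, so 2.1--2.2 cannot hold on all of $H$. The paper's sketch establishes them only on a bounded input domain. It then adds the proviso that the bounds must hold on a domain containing the hidden-state trajectories, which sits uneasily with \cref{rem:loc}'s assertion that the $y$-conditions are genuinely global.

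\textbf{Your bootstrap closes without a normalization layer.} Each row of $AV$ is a convex combination of rows of $V=XW_vW_o$. Hence $\|\mathrm{Att}(X)\|\le c\,\|X\|$, with $c$ depending only on $B_W$, $s$, $d$ and the chosen norms. Using this linear-growth bound in place of the global-Lipschitz step in \cref{cor:fwd} gives $\|y_k\|\le e^{\beta c}C_x$ independently of $L_f$. The theorems use the $y$-conditions only at the iterates, so it suffices to verify them on this fixed ball.

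\textbf{If you keep the pre-normalization fallback.} Use an $\epsilon$-regularized form such as $y\mapsto y/\sqrt{\|y\|^2+\epsilon}$. Without $\epsilon$, the normalization's own Jacobian blows up near $y=0$, and your claim that every factor is globally bounded would fail.
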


\begin{proof}[Proof sketch]
Row normalization bounds the query and key vectors in $\ell_\infty$, so the logits are $O(d_v)$ and
the $1/d_v$ scaling makes the argument of the softmax $O(1)$. This scaling controls the magnitude of the logits. To establish regularity of the full attention branch, we also account for its dependence on the hidden states and projection parameters. Column normalization bounds the query and key vectors in the
$(q,\mathrm{mean})$ norm, giving logits of size $O(d_v^{\max\{1,2/q\}})$, which is compensated by the
$d_v^{-\max\{1,2/q\}}$ scaling. For fixed sequence length and feature dimensions, the attention branch is jointly twice continuously differentiable in its input and projection parameters. On a bounded closed input domain and a bounded closed parameter domain, its first and second derivatives are therefore uniformly bounded in the prescribed norms. In particular, this bounds the state, parameter, and mixed derivatives required by \cref{as2}. If the architecture and domain bounds are independent of the residual depth $K$, the resulting regularity constants are independent of $K$. On convex domains, the corresponding Lipschitz inequalities follow by integration along line segments.
\end{proof}

Thus, the MLP and attention branches satisfy the required regularity conditions on their respective admissible domains. The depth-independent conclusions of \cref{thm:lipschitz,thm:smooth} apply provided the regularity bounds hold uniformly on a domain containing the hidden-state trajectories for all admissible parameter tuples, with parameter comparisons restricted to segments within the admissible parameter domain.

\newpage
\section{Additional Experiments}
Here we provide the validation loss curves of GPT-2 and the validation loss curves of ResNet. 
\subsection{GPT2 Val Loss}
\begin{figure}[H]
    \centering
    \includegraphics[width=\linewidth]
        {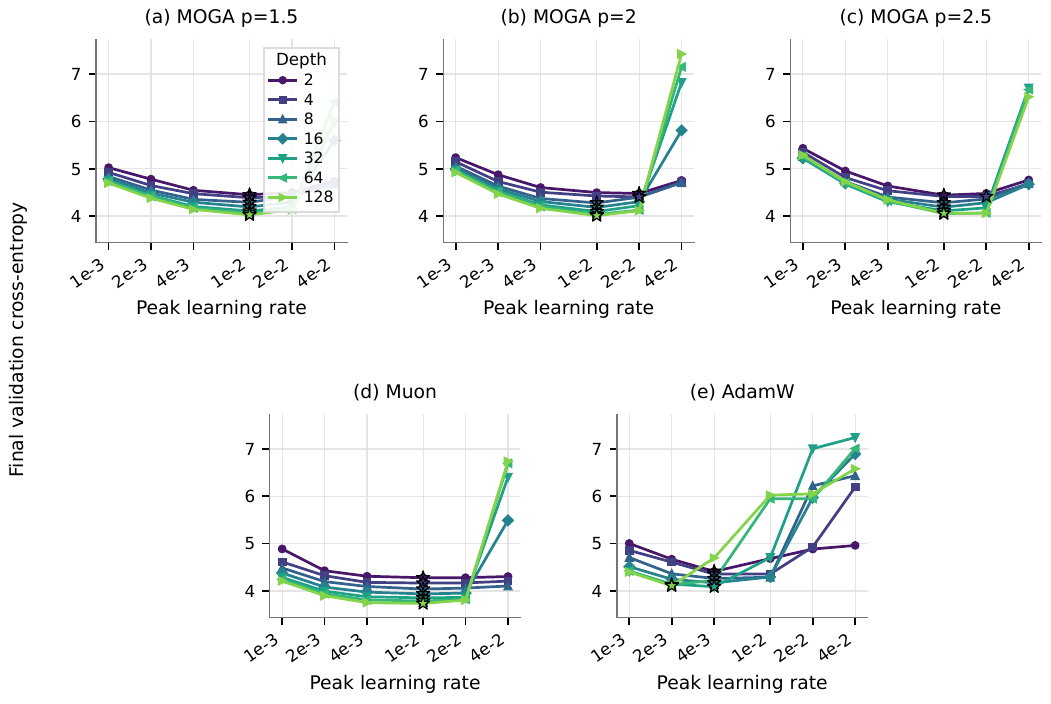}
    \caption{
        Learning-rate transfer across GPT-2 depths at fixed width.
        Panels show MOGA with $p\in\{1.5,2,2.5\}$, Muon, and AdamW.
        Colors and markers indicate transformer depth.
        Stars mark the lowest validation loss among the tested
        learning rates at each depth.
    }
    \label{fig:transformer-lr-transfer-2}
\end{figure}
As shown in Figure~\ref{fig:transformer-lr-transfer-2}, models that vary drastically in depth obtain their best validation loss performance at around the same peak learning rate. Muon and MOGA show strong learning rate transfer as all depths select nearly the same learning rate. The 4 Muon/MOGA panels show a very stable low loss region and a learning rate at a smaller depth can generally be transferred to a deeper model. AdamW exhibits the same instability for deeper models at the optimal learning rate $\eta^*=4 \times 10^{-3}$ of the shallow models and the optimal learning rate shifts slightly towards the left. The optimal learning rates across 7 depths still stay within a tight region.

\subsection{ResNet Val-loss}
\begin{figure}[H]
    \centering
    \includegraphics[width=\linewidth]{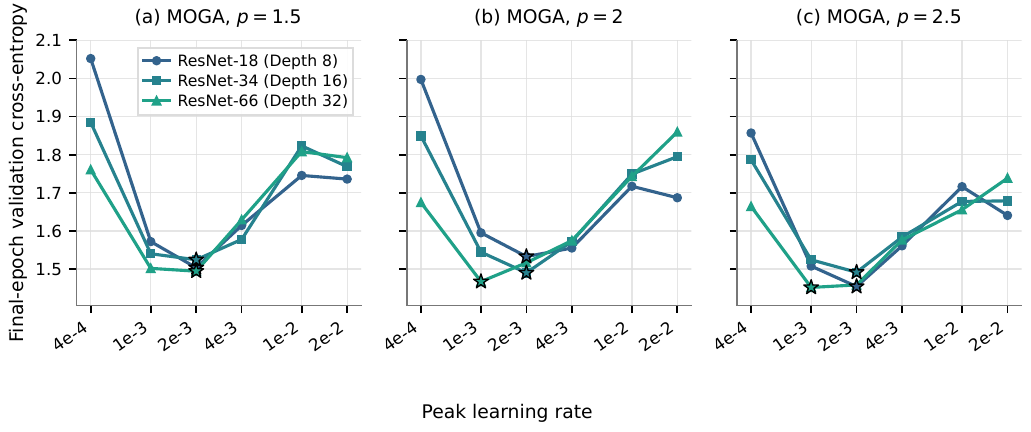}
    \caption{
        Learning-rate transfer across ResNet depths.
        Final validation loss is plotted against the peak learning rate,
        with each curve corresponding to a different depth. Colors and markers indicate ResNet depth.
        Stars mark the lowest validation loss among the tested
        learning rates at each depth.
    }
    \label{fig:resnet_val_loss-2}
\end{figure}
The ResNet Validation loss curves shown in figure~\ref{fig:resnet_val_loss-2} show that the optimal learning rates shifts slightly towards the left for ResNet-66 ($K=32$). MOGA $p=1.5$ and MOGA $p=2.5$ show nearly identical validation loss for $\eta =1 \times 10^{-3}$ and $\eta=2 \times 10^{-3}$, therefore the regret of selecting the optimal learning rate from shallower ResNets and transferring to ResNet-66 is low. However, MOGA $p=2$ clearly shows a preference for the smaller learning rate $\eta =1\times 10^{-3}$. Furthermore, increasing depth does not consistently improve validation loss at a fixed learning rate.

The above shows that the optimal learning rate still stays in a very tight region, but learning rate transfer is not as stable as evaluated on training loss. The models exhibited overfitting during training, with improvements in training loss not consistently translating into improvements in validation loss. Accordingly, learning-rate transfer measured by training loss should be distinguished from transfer measured by validation loss.

\end{document}

%% file: math_commands.tex
\usepackage{amsmath,amsfonts,bm}

\def\eqref#1{equation~\ref{#1}}

\def\1{\bm{1}}

\DeclareMathAlphabet{\mathsfit}{\encodingdefault}{\sfdefault}{m}{sl}
\SetMathAlphabet{\mathsfit}{bold}{\encodingdefault}{\sfdefault}{bx}{n}

\DeclareMathOperator*{\argmin}{arg\,min}